\documentclass{article}

\usepackage{microtype}
\usepackage{graphicx}
\usepackage{subfigure}
\usepackage{booktabs} 

\usepackage{hyperref}

\usepackage[accepted]{icml2025}

\usepackage{amsmath}
\usepackage{amssymb}
\usepackage{mathtools}
\usepackage{amsthm}

\usepackage[capitalize,noabbrev]{cleveref}

\usepackage{multirow}
\usepackage{framed}
\usepackage{booktabs}
\usepackage{colortbl}
\usepackage{pifont}
\theoremstyle{plain}
\newtheorem{theorem}{Theorem}[section]

\newtheorem{lemma}[theorem]{Lemma}
\newtheorem{corollary}[theorem]{Corollary}
\theoremstyle{definition}

\theoremstyle{remark}
\newtheorem{remark}[theorem]{Remark}

\usepackage{thmtools, thm-restate}

\usepackage{sty/notations}

\begin{document}

\twocolumn[
\icmltitle{The Harder Path: Last Iterate Convergence for Uncoupled Learning in Zero-Sum Games with Bandit Feedback}




\icmlsetsymbol{equal}{*}

\begin{icmlauthorlist}
\icmlauthor{Côme Fiegel}{ensae,fairplay}
\icmlauthor{Pierre Menard}{enslyon}
\icmlauthor{Tadashi Kozuno}{omron}
\icmlauthor{Michal Valko}{inria}
\icmlauthor{Vianney Perchet}{ensae,fairplay,criteo}
\end{icmlauthorlist}

\icmlaffiliation{ensae}{ENSAE Paris - CREST, Palaiseau, France}
\icmlaffiliation{fairplay}{Inria - FairPlay}
\icmlaffiliation{enslyon}{ENS Lyon, Lyon, France}
\icmlaffiliation{omron}{OMRON SINIC X, Tokyo, Japan}
\icmlaffiliation{inria}{Stealth AI Startup / Inria / ENS}
\icmlaffiliation{criteo}{Criteo AI Lab, Paris, France}

\icmlcorrespondingauthor{Côme Fiegel}{come.fiegel@normalesup.org}

\icmlkeywords{Machine Learning, ICML}

\vskip 0.3in
]



\printAffiliationsAndNotice{\icmlEqualContribution} 

\begin{abstract}
We study the problem of learning in zero-sum matrix games with repeated play and bandit feedback. 
Specifically, we focus on developing uncoupled algorithms that guarantee, without communication between players, the convergence of the last-iterate to a Nash equilibrium. Although the non-bandit case has been studied extensively, this setting has only been explored recently, with a bound of $\cO(T^{-1/8})$ on the exploitability gap. We show that, for uncoupled algorithms, guaranteeing convergence of the policy profiles to a Nash equilibrium is detrimental to the performance, with the best attainable rate being $\Omega(T^{-1/4})$ in contrast to the usual $\Omega(T^{-1/2})$ rate for convergence of the average iterates. We then propose two algorithms that achieve this optimal rate up to constant and logarithmic factors. The first algorithm leverages a straightforward trade-off between exploration and exploitation, while the second employs a regularization technique based on a two-step mirror descent approach.
\end{abstract}

\section{Introduction}

In \textit{zero sum matrix games}, two players each take a single action and accordingly receive for the first player a loss and the second player a gain of the same magnitude. Such games always admit for each player at least one minimax policy (a specific case of Nash equilibrium), assuming a stochastic choice of action \citep{v_neumann_zur_1928}. Computing these policies, however, is non-trivial and requires knowledge of the underlying game matrix of payoffs.

We are interested in learning minimax policies by repeatedly playing the game. Specifically, we consider the \textit{bandit feedback} setting, where the payoff matrix is unknown and players only observe their losses or rewards.

Within the context of \textit{uncoupled learning}, each player learns the minimax policy independently, without communication or knowledge of their opponent's actions. This setup motivates the use of methods similar to those employed in classical single-player online learning \citep{cesa-bianchi_prediction_2006}, where a player adapts to play optimally in an adversarially changing environment.


These methods usually bound the player’s regret, defined as the difference between the best possible cumulative loss under a fixed policy and the actual cumulative loss incurred by the player. They are known to be applicable in the context of games to compute minimax policies \citep{cesa-bianchi_prediction_2006}. They however have two well-known weaknesses:
\begin{itemize}
    \item The policies played over the iterations generally do not converge or even approach an equilibrium.  
    \item Instead, the average policy is computed and outputted as a proxy for convergence. While averaging tabular policies is simple and inexpensive, this step is not as simple for practical applications. For instance, it is not clear how to compute the average of policies represented by a neural network \citep{heinrich2015fictitious,mcaleer2022escher}.
\end{itemize}

In part for this reason, a significant portion of the recent literature studies methods with \textit{last-iterate convergence} for which the actual policies played over time converge toward a Nash equilibrium. Some algorithms, such as \textit{Optimistic Mirror Descent} (OMD, \citealt{popov_modification_1980, rakhlin_optimization_2013}), exhibit this convergence despite being initially proposed for their regret-bounding properties, but with a vastly different analysis. For the OMD algorithm, in the deterministic \textit{full-information} feedback setting, the rate of convergence of the exploitabilty gap toward zero is even improved, transitioning from $\cO(1/T)$ for the average to a linear rate for the last iterate \citep{wei2021last}.

However, the literature on last-iterate convergence with stochastic feedback is limited, and even more so when considering bandit feedback. While properties of the last-iterate convergence have been studied in the broader context of stochastic variational inequalities, these works often rely on assumptions that are not applicable to matrix games, such as second-order sufficiency \cite{azizian2021last}, in addition to not accounting for the bandit feedback aspect.

Recently, some methods \citep{cai2023uncoupled,dong2024uncoupled} were proposed for this specific problem. They however only obtained an upper bound of  $\cO(T^{-1/8})$ on the exploitability gap with high probability \footnote{\citet{dong2024uncoupled} obtained a rate $\cO(T^{-1/4})$ on the Kullback-Leibler divergence between the profile and the Nash equilibrium, which only translates in general into a rate $\cO(T^{-1/8})$ for the exploitability gap}. Considering the best known lower bound was $\Omega(T^{-1/2})$ from a reduction to the $K$-arms bandit problem, \citet{cai2023uncoupled} raised the question of whether a better lower bound was achievable. 

In this work, we address the following question: \textit{ What is the best attainable rate for last-iterate convergence of uncoupled learning with bandit feedback in matrix games?}


\section{Contributions}

\label{sec:contribution}

We focus in the above settings on zero-sum matrix games.

\begin{itemize}
    \item We provide a better lower bound for the problem of learning a minimax profile with uncoupled convergent algorithms and bandit feedback. This lower bound, of $\Omega(T^{-1/4})$ for the $L^p$ convergence given $p\in [2,\infty]$, shows that guaranteeing last-iterate convergence is harder than just guaranteeing convergence for the average iterates, which can be done at a rate $\cO(T^{-1/2})$. 

    Intuitively, this relies on the fact that, at least for some simple $2\times 2$ games, there exists a minimax policy for one player that renders all actions of the other player equivalent, thereby preventing any learning on their part. Meanwhile, a policy converging to this minimax policy may not entirely prevent learning but will significantly slow it down, as the difference between the two action rewards converges to zero.

    \item This lower bound is also stated for $p\in(0,2]$, for which it improves to $\Omega(T^{-1/(2+p)})$.
    \item We propose a general simple framework for transforming an algorithm with classical anytime guarantees into one with last-iterate guarantees, based on a simple exploration-exploitation trade-off. We use this framework to show that the above lower bound is tight: with the \EXPIX~ of \citet{kocak2014efficient}, a $\tcO(T^{-1/(2+p)})$ rate can be attained for the $L^p$ convergence, given any $p$ in $(0,2]$. However, the computation of some average policies is still needed, as the framework relies on the output of the underlying algorithm, which is here regret-based.
    \item We also propose a more practical algorithm, based on a strong regularization of the problem, which does not require the computation of an average policy. It enjoys a $\cO(T^{-1/4})$ rate, thanks to the use of an unbiased estimate of the losses in contrast to previous approaches. However, even if the last iterate converges, the algorithm is not completely anytime as the regularization must be chosen with the knowledge of the horizon $T$. To address this limitation, a doubling trick approach is stated, which features the same exploration-exploitation trade-off.
\end{itemize}

The different rates are summarized in Table~\ref{tab:rate}.

\begin{table*}[t]
\centering
\begin{tabular}{@{}lcc}\toprule
\textbf{Algorithm}  & \textbf{Convergence} & \textbf{Rate}\\
\midrule
 \citealt{cai2023uncoupled}\ {\scriptsize (Algorithm 1)} &  \textit{High probability}&  $\tcO(T^{-1/8})$ \\
 &$L^2$&$\tcO(T^{-1/6})$\\
\citealt{dong2024uncoupled}\ {\scriptsize (Algorithm 1)}&  \textit{High probability} & $\tcO(T^{-1/8})$\\
\midrule 
 \rowcolor[gray]{.90} \EOE~{\scriptsize(this paper)} & $L^p$, $p\in(0,2]$ & $\tcO(T^{-1/(2+p)})$ \\
 \rowcolor[gray]{.90} \REGEXP~{\scriptsize(this paper)} &  $L^2$ & $\tcO(T^{-1/4})$\\
\midrule 
Lower bound~{\scriptsize(this paper)}& $L^p$, $p\in(0,2]$ & $\cO(T^{-1/(2+p)})$\\
\bottomrule
\end{tabular}
\caption{Rate of convergence of the exploitability gap of algorithms with last-iterate guarantees under bandit feedback. The notation $\tcO$ hides logarithmic dependences in $T$, and in $\delta$ for algorithms that converges with probability at least $1-\delta$.}
\label{tab:rate}
\end{table*}

\section{Related works}

\paragraph{Variational inequalities}
The problem of finding a Nash equilibrium for a matrix game can be formulated as finding the solution of a specific Lipschitz variational inequality \cite{mancino_convex_1972}, which is furthermore monotone when the game is zero-sum.

Finding algorithms for solving monotone variational inequalities is a major part of the optimization literature, starting with the proximal point algorithm
\cite{martinet_breve_1970,rockafellar_ralph_tyrrell_monotone_1976}. In particular, using two samples at each iteration, the Extra-Gradient method \cite{Korpelevich1976TheEM,nemirovski2004prox} has a $\cO(1/T)$ convergence for the average iterate. With a strongly monotone operator, the rate is even linear \cite{facchinei_finite-dimensional_2004} for the last iterate.

For stochastic variational inequality, in which only an unbiased estimate of the operator is observed, a rate $\cO(1/\sqrt{T})$ is obtained by \citet{juditsky_solving_2011} for the average iterates. For strongly monotone operators, this rate can be improved to $\cO(1/T)$ \citep{nemirovski_robust_2009}.

An important way to generalize some of the aforementioned methods is through the \textit{mirror descent} approach, which extends Euclidean methods to other geometries, as discussed by \citet{nemirovskii_problem_1983}. This generalization is particularly crucial when dealing with bandit feedback, with the \texttt{EXP3} algorithm, introduced by \citet{auer_nonstochastic_2002}, serving as a key example of this approach under the Kullback-Leibler geometry.

Instead of relying on two samples at each iteration as in the Extra-gradient method, the \textit{Optimistic Mirror Descent} re-uses the previous one as an estimate. While it can be traced back to \citet{popov_modification_1980}, it has regained interest relatively recently \citep{chiang_online_2012, rakhlin_optimization_2013}.

\paragraph{Learning in games} In the case of a \textit{deterministic} feedback, classical regret-bounding algorithms can be shown to enjoy a $\cO(1/\sqrt{T})$ rate for the average iterates. Optimistic mirror descent in particular improves this rate to $\cO(1/T)$ \cite{rakhlin_optimization_2013,kangarshahi_lets_2018}. Using a problem-dependent constant, the rate can even be shown to be linear \cite{tseng1995on, wei2021last} for the last iterate. However, \citet{cai_fast_2024} recently proved that for some algorithms, including OMD, obtaining a $\cO(1/T)$ rate for the last iterate without this problem-dependent constant is impossible.

This work focuses on learning a minimax strategy with \textit{bandit feedback} using uncoupled algorithms. In the context of smooth monotone games, several recent studies, building on the foundational work of \citet{bravo_bandit_2018}, have explored a setting where only the value associated with the chosen policy profile is observed \citep{hsieh_convergence_2019,drusvyatskiy_improved_2022,tatarenko_rate_2022,huang_zeroth-order_2024,ba_doubly_2025}. However, this approach does not account for the inherent stochasticity present in the $K$-arms bandit problem:: particularly in matrix games, this observed value represents the average of the rewards under the policies, rather than the reward of a single sampled action profile.

Meanwhile, \citet{abe2023last} considered a stochastic feedback, but without the specific bandit aspect. \citet{muthukumar_impossibility_2020} showed an impossibility result that some of the algorithms with no-regret guarantees cannot converge almost surely to a Nash equilibrium.

In our setting, \citet{cai2023uncoupled} recently showed that a $\cO(T^{-1/8})$ rate can be attained, a result later extended by \citet{dong2024uncoupled} to smooth monotone games.

\section{Setting}

\label{sec:setting}

\paragraph{Zero-sum matrix game}

Two players, called the min- and the max-player, respectively play actions $a\in\cA$ and $b\in\cB$, in sets of cardinality $A$ and $B$, to receive a loss $L(a,b)\in[0,1]$: the min-player wants to minimize this loss, while the max-player wants to maximize it. 

The two players are allowed to play stochastically: they choose two mixed policies $\mu\in\Delta_A:=\left\{\mu,\,\sum_{a=1}^A \mu(a)=1\right\}$ and $\nu\in\Delta_B:=\left\{\nu,\, \sum_{b=1}^B \nu(b)=1\right\}$ and optimize their choice according to the expected loss $L(\mu,\nu)$ defined by:
\[L(\mu,\nu)=\E_{a\sim\mu,b\sim\nu}\bra{L(a,b)}\, .\]
A tuple $(\mu,\nu)\in\Delta_A\times\Delta_B$ of policies will be called a profile

We look to obtain a minimax profile (a special case of the later Nash-equilibrium, \citealt{nash1950equilibrium}), defined as a profile $(\mu^\star,\nu^\star)$ that satisfies
\[\mu^\star\in\argmin_{\mu^\dagger\in\Delta_A} L(\mu^\dagger,\nu^\star) \quad \textrm{and} \quad \nu^\star\in\argmax_{\nu^\dagger\in\Delta_B}L(\mu^\star,\nu^\dagger)\, ,\]
whose existence is guaranteed \citep{v_neumann_zur_1928}.

The proximity of a profile $(\mu,\nu)$ to the set of Nash equilibria can be characterized using the exploitability gap:
\[EG(\mu,\nu)=-\min_{\mu^\dagger\in\Delta_A}L(\mu^\dagger,\nu) + \max_{\nu^\dagger\in\Delta_B}L(\mu,\nu^\dagger)\]
Note that the exploitability gap is zero if and only if $(\mu,\nu)$ is a Nash equilibrium.

\paragraph{Sequential learning with bandit feedback}

We assume that at each iteration $t$, both players select some policies $\mu^t$ and $\nu^t$, sample two actions $a^t\sim\mu^t$ and $b^t\sim\nu^t$, and get a stochastic loss $\ell^t\in [0,1]$ associated to these two moves. Formally, we assume the existence, for each $a\in\cA$ and $b\in\cB$, of a probability distribution $p(a,b)$ on $[0,1]$ such that 
\begin{align*}
    \forall t\in \N,\,\ell^t|\cF^{t-1},a^t,b^t&\sim p(a^t,b^t)\\
    \textrm{and}\quad \E_{\ell\sim p(a,b)}\bra{\ell}&=L(a,b)
\end{align*}
where $\cF=(\cF^t)_{t\in\N}$ is a filtration recursively defined by the observations,
\[\cF^t=\sigma\pa{\omega,a^1,b^1,\ell^1 ... , a^{t},b^{t},\ell^{t}}\, ,\]
with $\omega$ the internal randomness (extra to the sampling of actions) of both players.

This filtration summarizes all information available to both players up to the beginning of round $t+1$. A sequence of profile $(\mu^t,\nu^t)_{t\in\N}$ will especially be called a \textit{learning sequence} if it is predictable with respect to $\cF$.

\paragraph{Last-iterate convergence} We define, for all $p>0$ the $F$-norm $\norm{\cdot}_p$ \cite{banach_theorie_1932} for some real random variable $X$ by
\[\norm{X}_p:=\E\bra{\abs{X}^p}^{\frac{1}{p}}\, .\]

Now, let $f=(f^t)_{t\in\N}$ be a positive sequence decreasing to $0$. A learning sequence $(\mu^t,\nu^t)$ will satisfy a $L^p$ last-iterate convergence of rate $f$ if
\[\forall t\in \N, \norm{EG(\mu^t,\nu^t)}_p\leq f^t \, .\]
It is asymptotic if there simply exists some $T_0\in \N$ such that
\[\forall t\geq T_0, \norm{EG(\mu^t,\nu^t)}_p\leq f^t\, .\]

This requires both players to play policies they deem near-optimal during each iteration, instead of simply outputting one good policy each at the end of the procedure.

\paragraph{Output convergence}

On the other end, an algorithm has an $L^p$ \textit{output convergence} of rate $g$, with $g=(g^t)_{t\in \N}$ a sequence decreasing to $0$, if it can output a learning sequence $(\hat{\mu}^t,\hat{\nu}^t)$ such that:
\[\forall t\in \N, \norm{EG(\hat{\mu}^t,\hat{\nu}^t)}_p\leq g^t\, .\]
This restriction is weaker than the previous last-iterate convergence, as it does not put any constraint on the actual policies played at each round. It however still forces the algorithm to have anytime guarantees.

\paragraph{Uncoupled algorithm}

We say an algorithm is \textit{uncoupled} if it independently controls the two players, without active communication between the two instances. In particular, we assume that observation of the opponent's action is not possible.

This definition is a bit informal as completely characterizing the impossibility of communication is hard mathematically. Indeed, even if direct communication is forbidden, the two instances are not isolated as the loss of one player still depends on the actions of the other. Nothing technically prevents one player from passing bits of information to the other through artificial choices of policies.

\section{Lower bound}

In this section, we establish that these assumptions combined are quite restrictive. Especially, we show it is impossible to guarantee a better rate than $\mathcal{\Omega}(T^{-1/(2+p)})$ for the $L^p$ last-iterate convergence with $p\in(0,2]$.

For this purpose, we will consider the following $2\times 2$ games, for any $\epsilon\in [-1/12,1/12]$:
\[M^\epsilon=\begin{bmatrix}
    \cB(2/3 - \epsilon) & \cB(1/3+\epsilon)\\
\cB(1/3) & \cB(2/3)
\end{bmatrix}\]
where $\cB$ denotes a Bernoulli distribution.

For these games, regardless of the choice of $\epsilon$ in the domain, it is easily shown that 
\[\nu^\star= \begin{bmatrix}
    1/2\\
    1/2
\end{bmatrix}\] 
is the only min-max max player policy, with an associated value of $1/2$. Furthermore, the exploitability gap of any profile $(\mu^t,\nu^t)$ is at least proportional to the distance $\abs{\delta^t}$ between $\nu^t$ and $\nu^{\star}$, with $\delta^t=\nu^t(1)-1/2$.

On the contrary, the min-max min-player policy depends on the choice of $\epsilon$, as there exists no policy for the min-player policy that is good regardless of this choice. Especially, the exploitability gap of any profile $(\mu,\nu)$ can be shown to be at least proportional to  $\abs{\epsilon}$ for one of the two games $M^{-\epsilon}$ and $M^{\epsilon}$. 

We therefore assume that the game matrix is either $M^{\epsilon}$ or $M^{-\epsilon}$, but that the exact choice is unknown. At each iteration $t$, conditioning on $\cF^{t-1}$, the law of the reward vector (i.e. the law of the rewards for each action) for the min-player is given by:
\begin{align*}
    \textrm{with }M^\epsilon:& \begin{bmatrix}
    \cB\pa{1/2+\delta^t/3-2\delta^t\epsilon}\\
    \cB\pa{1/2-\delta^t/3}
\end{bmatrix}\\ \textrm{with }M^{-\epsilon}:& \begin{bmatrix}
    \cB\pa{1/2+\delta^t/3+2\delta^t\epsilon}\\
    \cB\pa{1/2-\delta^t/3}
\end{bmatrix},
\end{align*}
where we re-used the value $\delta^t$ characterizing the difference between $\nu^t$ and the min-max policy $\nu^\star$.

Guaranteeing an exploitability gap in $o(\epsilon)$ at any horizon $T$ implies the min-player can discriminate between these two options. Using some additive properties of the Kullback-Leibler divergence, we know that, without the observation of the max-player actions $(b^t)_{t\in\N}$, this discrimination is only possible with an arbitrarily high probability when:

\begin{equation*}
\sum_{t=1}^T \mu^t(1)(\delta^t\epsilon)^2\:\indic{a^t=1}=\Omega(1)\, .
\end{equation*}

Given full control of $\delta^t$ over the iterations, the ideal choice would be $\delta^t=\pm 1/2$, which gives the usual lower bound $\abs{\epsilon}\geq\Omega(T^{-1/2})$. However, the last-iterate convergence assumption implies that $\delta^t$ must also converge to $0$. Depending on how strong this convergence of $\delta^t$ must be, the minimum value of $\abs{\epsilon}$ that allows the discrimination greatly increases.

This idea is formalized in the following theorem, proven in Appendix~\ref{sec:lowerbound}.

\begin{restatable}{theorem}{thmlowerbound}\label{thm:lowerbound}
    Assume that the sampled action $b^t\sim\nu^t$ of the max-player at each iteration $t$ is never observed. Then, for any $p\in(0,2]$, no learning sequence can achieve simultaneously for all $2\times 2$ matrix games described above an $L^p$ last iterate convergence of
    \[f^t=\frac{4^{\frac{-1}{p}}}{118}t^{\frac{-1}{2+p}},\]
    even asymptotically.
\end{restatable}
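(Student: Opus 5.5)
We argue by contradiction. Suppose some learning sequence $(\mu^t,\nu^t)$ achieves, for every game $M^{\epsilon}$ with $\epsilon\in[-1/12,1/12]$, $\norm{EG(\mu^t,\nu^t)}_p\le f^t$ for all $t\ge T_0$. Fix a large horizon $T\ge T_0$ and choose $\epsilon=\epsilon_T\asymp T^{-1/(2+p)}$, with the constant calibrated against the $1/118$ in $f^t$. Let $\P_{+}$ and $\P_{-}$ be the laws of the min-player's observations $(\omega,a^1,\ell^1,\dots,a^T,\ell^T)$ under $M^{\epsilon}$ and $M^{-\epsilon}$. The argument rests on two deterministic facts about these games. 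First, $EG(\mu,\nu)\ge c_1\abs{\delta}$ with $\delta=\nu(1)-1/2$: the max-player's gain from deviating is linear in $\delta$ because $\nu^\star$ equalizes the rows. Second, for every $\mu$, $EG_{M^\epsilon}(\mu,\nu)+EG_{M^{-\epsilon}}(\mu,\nu)\ge c_2\abs{\epsilon}$, since the min-player's minimax policies for $\pm\epsilon$ are separated by order $\abs{\epsilon}$. Both facts follow from explicit computation on $2\times 2$ matrices.

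\textbf{Step 1 (small $\delta^t$).} The first fact and the convergence assumption give $\E_{\pm}\bra{\abs{\delta^t}^p}\le (f^t/c_1)^p$ under both games. Since $p\le 2$ and $\abs{\delta^t}\le 1/2$, we also have $\abs{\delta^t}^2\le 2^{p-2}\abs{\delta^t}^p$. Hence
\[\sum_{t\le T}\E\bra{(\delta^t)^2}\lesssim \sum_{t\le T}(f^t)^p\asymp T^{1-p/(2+p)}=T^{2/(2+p)}.\]

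\textbf{Step 2 (KL bound).} The opponent's actions are never observed, so conditionally on the past the min-player's observation at round $t$ is $a^t\sim\mu^t$ followed by a Bernoulli loss. Only the parameter for $a^t=1$ differs between the two games, namely $1/2+\delta^t/3\mp2\delta^t\epsilon$, whose means lie in $[1/3,2/3]$. By the chain rule for KL and the $\chi^2$ bound on Bernoulli KL,
\[\mathrm{KL}(\P_+,\P_-)\le C\sum_{t\le T}\E_+\bra{(\delta^t\epsilon)^2}\le C'\epsilon^2T^{2/(2+p)}.\]
Here $\delta^t$ is $\cF^{t-1}$-predictable, but it depends on the max-player's internal randomness and past $b$'s. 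So we must work in the joint space including $\omega$ and $b^s$, or equivalently condition on the max-player's trajectory. One caveat is that $\ell^t$ is shared and the max-player observes it too, so the max-player's information also differs between games. The cleanest fix is to consider the full joint law of all observations, where the $b^t$ part has the same conditional law $\nu^t$ under both games. The per-step KL then becomes $\E\bra{\mathrm{KL}}$ of the Bernoulli loss given $(a^t,b^t)$. This is nonzero only when $a^t=1$, where it is order $\epsilon^2$ whatever $b^t$ is, which destroys the $\delta^t$ factor.

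\textbf{This is the main obstacle.} The paper's hypothesis is that $b^t$ is never observed by the min-player, which means we should model only the min-player's marginal view. The min-player's final policy $\mu^T$ is a function of its own observations, so by data processing it suffices to bound the KL of the min-player's observation sequence. The mixture over the unobserved $b^t$ is what produces the $\delta^t$ factor. I would handle this by writing the min-player's marginal law sequentially, using the mixture Bernoulli parameter above conditional on the min-player's own history. The resulting effective $\delta^t$ is a conditional expectation of the true $\delta^t$ given the min-player's history, and by Jensen it still satisfies the Step 1 bound.

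\textbf{Step 3 (conclusion via Le Cam).} With $\epsilon^2T^{2/(2+p)}$ a small constant, Pinsker gives $\mathrm{TV}(\P_+,\P_-)\le 1/2$. The second fact then yields
\[\E_+\bra{EG}+\E_-\bra{EG}\ge c_2\abs{\epsilon}(1-\mathrm{TV})\ge c_2\abs{\epsilon}/2.\]
Because $\norm{\cdot}_p$ dominates $\norm{\cdot}_{1}$ only when $p\ge1$, for $p<1$ we use instead $\P(EG\ge c\epsilon)$ combined with Markov on $\abs{EG}^p$, which explains the factor $4^{-1/p}$. This shows that one of the two games has $\norm{EG(\mu^T,\nu^T)}_p\gtrsim T^{-1/(2+p)}>f^T$ for arbitrarily large $T$, contradicting even the asymptotic convergence. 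The explicit constant $1/118$ comes from tracking $c_1$, $c_2$ and the KL constant, which is routine.
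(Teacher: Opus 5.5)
There is a genuine gap, and it sits exactly at the step you flag as the main obstacle. You let $\nu^t$ depend on the max-player's own past actions $b^{1:t-1}$. You then pass to the min-player's marginal law, with conditional loss parameter $1/2+\bar\delta^t/3\mp2\bar\delta^t\epsilon$, where $\bar\delta^t$ is the conditional mean of $\delta^t$ given the min-player's history $h^{t-1}$.

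The problem is that this conditional mean is computed under each game, $\bar\delta^t_\pm=\mathbb{E}_\pm[\delta^t\mid h^{t-1}]$. The law of $\ell^s$ given $(a^s=1,b^s)$ depends on the sign of $\epsilon$, so the posterior of the hidden $b^{1:t-1}$ given the observed losses does too. Hence $\bar\delta^t_+\neq\bar\delta^t_-$ in general. The two conditional laws of $\ell^t$ then differ by $(\bar\delta^t_+-\bar\delta^t_-)/3$ on both rows, including rounds with $a^t=2$, and this difference carries no factor $\epsilon$. Jensen bounds $|\bar\delta^t|$, not this difference.

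This is not a technicality. Take $\mu^t(1)\equiv 1/2$. The max-player sees $\mathbb{E}[\ell^t\mid b^t=1]=1/2\mp\epsilon/2$, so it learns the sign of $\epsilon$ in roughly $\epsilon^{-2}$ rounds. By then playing $\delta^t=\pm\kappa$ accordingly, it makes the min-player's losses on $a^t=2$ rounds have mean $1/2\mp\kappa/3$. The min-player's KL then grows like $\kappa^2T$, whereas your bound gives $\epsilon^2\kappa^2T$. So the Step 2 inequality is false in the model you adopted, and no averaging argument repairs it.

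The missing ingredient is the paper's formalization of the hypothesis: $b^t$ is used by \emph{neither} player. That is, both $\mu^t$ and $\nu^t$ are predictable with respect to $\sigma(z^{t-1})$, where $z^t=(\omega,a^1,\ell^1,\dots,a^t,\ell^t)$. The remark after the theorem justifies this by noting that the max-player already knows $\nu^\star$. The relay above is precisely the implicit signalling through policies that the setting section says cannot otherwise be ruled out.

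Under this reading, $\delta^t$ is the same function of $z^{t-1}$ under every game, and your first chain-rule computation in Step 2 is already correct. Each round contributes $\mu^t(1)$ times a Bernoulli KL between parameters differing by $O(|\delta^t\epsilon|)$, with no posterior issue, and your Steps 1 and 3 then go through.

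Your symmetric two-point comparison of $M^{\epsilon}$ and $M^{-\epsilon}$ is a valid variant of the paper's argument. It uses $EG\geq(1/3-2|\epsilon|)|\delta|$ under $M^{\pm\epsilon}$ to control $\delta^t$. The paper instead compares $M^0$ with $M^{\theta^T\epsilon_T}$, choosing $\theta^T$ so that $\mathbb{P}_0(E^{\theta^T})\geq 1/2$. It therefore needs the convergence hypothesis and the bound $EG\geq|\delta|/3$ only under $M^0$ to control the KL.
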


\begin{remark}
    As mentioned in Section~\ref{sec:setting}, the uncoupling of the two instances is hard to formalize mathematically. In the above theorem, this formalization is done through the assumption that the max player's sampled action is never observed. As the max-player already knows its optimal policy $\nu^\star$ and consequently does not need this knowledge for learning, we consider this assumption to be reasonable.
    
    Furthermore, this assumption of not observing the opponent's moves is important for the rate. Indeed, the observation of these moves would allow the estimation of each entry of the game matrix at a rate $\tcO\pa{t^{-1/2}}$, assuming that each action is played for a non-negligible proportion of the iterations. With this estimated game matrix, asymptotically playing the associated minimax profile would lead to the same $\tcO\pa{t^{-1/2}}$ rate for the exploitability gap.
\end{remark}

\paragraph{The $\mathbf{p>2}$ case:} Theorem~\ref{thm:lowerbound} only deals with $p\in (0,2)$. For $p\in (2,\infty]$, using $\norm{\cdot}_p\leq \norm{\cdot}_2$ for any random variable \cite{rudin_real_2006}, we immediately get the following corollary.

\begin{corollary}
    Under the same assumption as Theorem~\ref{thm:lowerbound}, for any $p\in (2,\infty]$, no learning sequence can achieve simultaneously for all $2\times 2$ matrix games described above an $L^p$ last iterate convergence of
    \[f^t=\frac{1}{236}t^{\frac{-1}{4}},\]
    even asymptotically.
\end{corollary}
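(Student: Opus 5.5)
We derive the corollary from Theorem~\ref{thm:lowerbound} at $p=2$, using monotonicity of $p\mapsto\norm{X}_p$. Argue by contradiction. Suppose that for some $p\in(2,\infty]$ a learning sequence achieves, simultaneously for all games $M^{\pm\epsilon}$, an asymptotic $L^p$ last-iterate rate $f^t=\frac{1}{236}t^{-1/4}$. This means there is $T_0$ with $\norm{EG(\mu^t,\nu^t)}_p\leq f^t$ for all $t\geq T_0$, uniformly over the family. We will show that the same sequence then has an asymptotic $L^2$ rate matching the one excluded by Theorem~\ref{thm:lowerbound}.

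\textbf{Step 1: pass from $L^p$ to $L^2$.} Since we are on a probability space, Jensen's inequality (or Hölder) gives $\norm{X}_2\leq\norm{X}_p$ for every $p\geq 2$, including $p=\infty$ with the essential supremum. Apply this to $X=EG(\mu^t,\nu^t)$, which is nonnegative and bounded by $2$, so all moments are finite. We get $\norm{EG(\mu^t,\nu^t)}_2\leq \frac{1}{236}t^{-1/4}$ for all $t\geq T_0$ and every game in the family.

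\textbf{Step 2: match constants.} At $p=2$ the rate in Theorem~\ref{thm:lowerbound} is $\frac{4^{-1/2}}{118}t^{-1/(2+2)}=\frac{1}{2\cdot 118}t^{-1/4}=\frac{1}{236}t^{-1/4}$. This is exactly the bound from Step 1. The same learning sequence would therefore achieve, asymptotically and for all the $2\times2$ games considered, the $L^2$ rate that the theorem rules out. This is a contradiction.

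\textbf{Main point to check.} There is no real obstacle here. The only thing to confirm is that Theorem~\ref{thm:lowerbound} covers $p=2$, since its statement says $p\in(0,2]$ even though the surrounding text writes $(0,2)$. The inequality $\norm{\cdot}_2\leq\norm{\cdot}_p$ is the correct direction, because it turns an $L^p$ guarantee into an $L^2$ one. The hypothesis that the opponent's actions are never observed carries over unchanged.
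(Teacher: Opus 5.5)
Your proof is correct and is the paper's argument: monotonicity of $p\mapsto\norm{X}_p$ on a probability space reduces the $L^p$ claim to Theorem~\ref{thm:lowerbound} at $p=2$, where $4^{-1/2}/118=1/236$. The paper writes the inequality as $\norm{\cdot}_p\leq\norm{\cdot}_2$, which is a typo; your direction $\norm{\cdot}_2\leq\norm{\cdot}_p$ is the one actually needed, and you are right that the theorem's proof covers $p=2$, since the step $\abs{\delta^t}^2\leq\abs{\delta^t}^p$ only requires $p\leq 2$.
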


The next section shows that the lower bound cannot be improved in rate for $p\in(0,2]$, as it is tight up to some constant and logarithmic factors. We conjecture that, despite the above loose bounding, it also cannot be improved for $p\in(2,\infty)$.

\section{Exploration-Exploitation trade-off}

The above theorem shows that for uncoupled algorithms, $L^p$ last-iterate convergence is strictly harder than output convergence. Indeed, the anytime version of \EXPIX~\cite{kocak2014efficient,neu_explore_2015} achieves a rate of $\tcO(t^{-1/2})$ for the average profile with high probability, which can be translated into the same rate for the $L^p$ output convergence for any $p>0$.

While these two ways of converging are not equivalent, there is a procedure that transforms any algorithm $\cA$ with $L^p$ output convergence guarantees into one with $L^p$ last-iterate convergence, at a price of a worse rate. The idea is simple: at each iteration, either both players "explore" by playing one iteration of $\cA$ and updating accordingly, or "exploit" by both playing the current estimate. 

If the probabilities $(p^t)_{t\in \N}$ of exploring at each iterations satisfy:
\[\lim_{t\xrightarrow[]{}+\infty} p^t=0 \quad \textrm{and}\quad\sum_{t=1}^{+\infty} p^t=+\infty\, ,\]
the resulting learning sequence has last-iterate convergence. Indeed, the first condition makes the exploration contribution to the exploitability gap negligible asymptotically, while the second ensures that the outputs of $\cA$ converge to some minimax policies. 

The players need to be synchronized in their choices of exploration and exploitation steps. This can be done by sharing a single seed $u\sim \cU([0,1])$ as shown in Lemma~\ref{lemma:seed} of the appendix.

The procedure is summarized in Algorithm~\ref{alg:eoe}. The following lemma, proven in Appendix~\ref{sec:eoe}, formalizes the above intuition, given some rate for the output of algorithm $\cA$.

\begin{algorithm}[t]
\caption{\EOE}
\begin{algorithmic}[1]
                \STATE \textbf{Input:} Algorithm $\cA$ with $L^p$ output convergence\\
                ~~~~ Sequence $(p^t)\in [0,1]$\\
                \STATE \textbf{Initialize:} $k^0\gets 0$\\
                ~~~~ \textbf{Draw} $u\sim \cU([0,1])$ common to both players\\
                \STATE \textbf{Algorithm:} 
                 \textbf{For} $t=1$ to $+\infty$:\\
                ~~~~ $k^t \gets \lfloor \sum_{i=1}^t p^i+u\rfloor$\\
                ~~~~ \textbf{If} $k^t>k^{t-1}$:\\
                ~~~~~~~~ \textbf{Play} current profile $(\mu_{\cA}^{k^t},\nu_{\cA}^{k^t})$ and \textbf{update} algorithm $\cA$\\
                ~~~~ \textbf{Otherwise}:\\
                ~~~~~~~~ \textbf{Play} the output $(\hat{\mu}_{\cA}^{k^t},\hat{\nu}_{\cA}^{k^t})$ \\
                \STATE \textbf{Output:} Learning sequence of policies with $L^2$ last-iterate convergence
\end{algorithmic}
\label{alg:eoe}
\end{algorithm}

\begin{restatable}{lemma}{lemmaeoe}\label{lemma:eoe}
    Assume that $\cA$ satisfies an output convergence of rate $g$. Then Algorithm~\ref{alg:eoe} satisfies an $L^p$ last iterate convergence of rate $f$ defined by
    \[f^t=2^{\frac{1}{p}}\bra{\pa{p^t}^\frac{1}{p}+g^{r^t}}\]
    where $r^t=\left\lfloor \sum_{k=1}^t p^k\right\rfloor$.
\end{restatable}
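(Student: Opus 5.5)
We bound $\norm{EG(\mu^t,\nu^t)}_p$ by splitting on whether round $t$ is an exploration round, the event $E^t=\{k^t>k^{t-1}\}$. Since $u\sim\cU([0,1])$, the event $E^t$ holds exactly when the interval $(\sum_{i<t}p^i+u,\ \sum_{i\le t}p^i+u]$ contains an integer. Its probability is therefore $p^t$, because the fractional part of $\sum_{i<t}p^i+u$ is uniform on $[0,1)$ and $p^t\le 1$. Moreover, $u$ is drawn once and is independent of the game's sampling and of $\cA$'s internal randomness. The pattern of exploration steps is thus a deterministic function of $u$, and conditionally on $u$ the iterates of $\cA$ are distributed exactly as if $\cA$ were run alone. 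The same-seed synchronization is taken care of by Lemma~\ref{lemma:seed}.

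On $E^t$ we play an iterate of $\cA$. We bound its exploitability gap crudely by $EG\le 1$, since losses lie in $[0,1]$. This gives $\E[EG^p\indic{E^t}]\le \P(E^t)=p^t$. On the complement we play the output $(\hat\mu_{\cA}^{k^t},\hat\nu_{\cA}^{k^t})$. Here $k^t\ge r^t$ always holds, because $u\ge 0$, and $k^t\le r^t+1$. I would treat the index as $k^t$ and use conditional independence from $u$:
\[\E\bra{EG(\hat\mu_{\cA}^{k^t},\hat\nu_{\cA}^{k^t})^p\mid u}=\norm{EG(\hat\mu_{\cA}^{k},\hat\nu_{\cA}^{k})}_p^p\Big|_{k=k^t}\le (g^{k^t})^p\le (g^{r^t})^p,\]
using that $g$ is nonincreasing. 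Taking expectations gives $\E[EG^p\indic{(E^t)^c}]\le (g^{r^t})^p$.

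To combine the two cases, we have $\E[EG^p]\le p^t+(g^{r^t})^p$. We then use the elementary inequality $(x+y)^{1/p}\le 2^{1/p}\max(x,y)^{1/p}\le 2^{1/p}(x^{1/p}+y^{1/p})$, valid for every $p>0$. It yields $\norm{EG}_p\le 2^{1/p}\bra{(p^t)^{1/p}+g^{r^t}}$, which is the claimed $f^t$ and explains the factor $2^{1/p}$. This route avoids any triangle inequality, which would fail for $p<1$ because $\norm{\cdot}_p$ is then only an $F$-norm.

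The main obstacle is making the conditioning argument rigorous. The index $k^t$ is random, so we must check that the output of $\cA$ after $k$ updates, conditionally on $u$, obeys the $L^p$ guarantee $g^k$. This requires the interleaved exploitation rounds not to affect $\cA$'s observations. They do not, because $\cA$ is only updated on exploration rounds, and its feedback there depends only on its own profile and fresh samples. Formally, the output at index $k^t$ should be coupled with a standalone run of $\cA$ driven by independent randomness. A minor point is the indexing convention, whether the output after $k^t$ updates is indexed $k^t$ or $k^t-1$. Since $k^t\ge r^t$ and $g$ is monotone, either convention still gives the bound $g^{r^t}$, with $g^0$ interpreted trivially when $r^t=0$.
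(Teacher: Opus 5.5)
Your proposal is correct and follows the same route as the paper. You split round $t$ into the exploration event, of probability $p^t$ and with $EG\le 1$, and the exploitation event, bounded via $k^t\ge r^t$ and monotonicity of $g$; this gives $\mathbb{E}[EG^p]\le p^t+(g^{r^t})^p$, and the $2^{1/p}$ factor then comes from an elementary inequality. The differences are only refinements: your single bound $(x+y)^{1/p}\le 2^{1/p}\max(x,y)^{1/p}$ replaces the paper's two-case convexity/subadditivity argument, and conditioning on $u$ (with $\mathcal{A}$'s trajectory, indexed by update count, independent of $u$) makes rigorous the step the paper writes informally as an equality with weights $p^t$ and $1-p^t$.
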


We obtain the following theorem with the anytime version of \EXPIX~and the appropriate parameters.

\begin{restatable}{theorem}{thmeoe}

    Using Algorithm~\ref{alg:eoe} with probabilities $p^t=t^{-{p}/{(2+p)}}$, and as $\cA$, the algorithm \EXPIX ~with parameters $\eta_{\textrm{min}}^t=2\gamma^t_{\textrm{min}}=\sqrt{{\log(A)}/\pa{At}}$ for the min-player and $\eta^t_{\textrm{max}}=2\gamma^t_{\textrm{max}}=\sqrt{{\log( B)}/\pa{Bt}}$ for the max-player, we obtain an $L^p$ last-iterate convergence rate of
    \[f^t=17\sqrt{A+B}\,2^{\frac{1}{p}}t^{-\frac{1}{2+p}}\log\pa{4(A+B)t^2/p}\, .\]
\end{restatable}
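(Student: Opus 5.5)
The plan is to combine Lemma~\ref{lemma:eoe} with an $L^p$ output-convergence guarantee for anytime \EXPIX, and then optimize over the exploration probabilities $p^t=t^{-p/(2+p)}$.

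\textbf{Step 1 (output rate for \EXPIX).} We use the anytime high-probability regret bound of \citet{neu_explore_2015} for \EXPIX\ with $\eta^t=2\gamma^t=\sqrt{\log(A)/(At)}$. With probability at least $1-\delta$, the min-player's regret after $k$ rounds is at most of order $\sqrt{Ak\log A}+\sqrt{Ak}\log(1/\delta)$, and the max-player satisfies the symmetric bound with $B$.

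Take as output $(\hat\mu^k,\hat\nu^k)$ the averages of the played policies. By bilinearity, $EG(\hat\mu^k,\hat\nu^k)$ equals the sum of the two players' regrets measured against the realized expected losses, divided by $k$. Since \EXPIX's high-probability bound controls the regret in terms of $L(\cdot,\nu^s)$ up to a martingale term of order $\sqrt{k\log(1/\delta)}$, we get, with probability $1-\delta$,
\[
EG(\hat\mu^k,\hat\nu^k)\le C\sqrt{(A+B)/k}\,\log\bigl(c(A+B)/\delta\bigr).
\]
Because $EG\le 1$, choosing $\delta=\delta_k$ of order $k^{-1/2}$ (or a power of $k$ suitably scaled by $p$) converts this into an $L^p$ bound
\[
\|EG\|_p\le \bigl(C^p\ell^p k^{-p/2}+\delta_k\bigr)^{1/p}=\tilde{O}\bigl(\sqrt{(A+B)/k}\bigr).
\]
This gives $g^k\approx 8\sqrt{A+B}\,k^{-1/2}\log(4(A+B)k^2/p)$, matching the logarithmic term in the statement. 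If $\delta_k$ must depend on $p$, the $1/p$ factor inside the log absorbs it.

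\textbf{Step 2 (plug into Lemma~\ref{lemma:eoe}).} We have $(p^t)^{1/p}=t^{-1/(2+p)}$. Next we lower bound
\[
r^t=\Bigl\lfloor\sum_{k\le t}k^{-p/(2+p)}\Bigr\rfloor\ge \Bigl\lfloor \int_1^{t+1}x^{-p/(2+p)}\,dx\Bigr\rfloor \gtrsim \tfrac{2+p}{2}\,t^{2/(2+p)}-c,
\]
so $r^t\ge t^{2/(2+p)}/2$ for $t$ large. Hence
\[
g^{r^t}\lesssim \sqrt{A+B}\,t^{-1/(2+p)}\log(\cdot),
\]
using $\log(r^2)\le\log(t^2)$.

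\textbf{Step 3 (small $t$).} For small $t$, where $r^t$ may be $0$, we use $EG\le1$: the claimed $f^t$ already exceeds $1$ whenever the asymptotic bound on $r^t$ fails, because the constant $17\sqrt{A+B}\,2^{1/p}$ is large. Summing the two terms and collecting constants then yields the stated $f^t$.

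The main obstacle is Step 1: extracting a clean anytime bound on $EG$ of the averaged policies from the \EXPIX\ analysis with explicit constants, while handling the bias of the implicit-exploration estimator and the martingale deviation between sampled and expected losses. The conversion to $L^p$ with a $k$-dependent $\delta$ and the constant bookkeeping (to arrive at exactly 17 and the $\log(4(A+B)t^2/p)$ term) is where most of the care goes. I would first state the cleanest high-probability bound, $\mathrm{Reg}_k\le 4\sqrt{Ak\log A}+\sqrt{Ak/\log A}\,\log(2/\delta)$, and then track the constants.
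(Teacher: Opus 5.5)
Your plan is essentially the paper's proof. It uses the high-probability EXP3-IX bound of Neu for both players, bounds the exploitability gap of the averaged profile by the summed regrets divided by $k$, converts to an $L^p$ bound using $EG\le 1$ on the failure event, and then applies Lemma~\ref{lemma:eoe} with an integral lower bound on $r^t$ and $EG\le 1$ for small $t$. Your explicit martingale term for passing from realized to expected losses is a step the paper silently skips.

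The one point to fix is the choice of $\delta_k$. Taking $\delta_k\asymp k^{-1/2}$ only works for $p\le 1$: for $p=2$ the failure event alone contributes $k^{-1/4}$. Take $\delta=p/k^p$ as the paper does. The failure contribution is then exactly $p\,k^{-p}$, and $(p+x^p)^{1/p}\le e\,x$ for $x\ge 1$ gives a $p$-independent constant together with the $\log(4(A+B)k^2/p)$ factor.
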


This approach thus reaches the optimal rate $\tcO(t^{\frac{-1}{2+p}})$ mentioned above for the $L^p$ convergence, up to some logarithmic and constant factors. However, it has several drawbacks that make it not applicable in real settings:
\begin{itemize}
    \item No communication is required, but the players are not truly uncoupled in the usual sense as they still need to share a common seed (a sample from a uniform law at the beginning of the game). Note that this does not go against the hypothesis of Theorem~\ref{thm:lowerbound}, as this seed can be the random variable $\omega$ of the filtration $\cF$.
    \item The exploitation steps are only performed to respect the anytime guarantees and have no practical use.
    \item One of the main points of the last-iterate convergence is to avoid the computation of the average necessary in the regret-based algorithm. Not only is this computation still required here (the output of \EXPIX~is an average), but also needs to be done at almost every iteration.
\end{itemize}

\section{Regularized mirror descent}

Because of these drawbacks, we propose another algorithm instead based on some regularized dynamics for the convergence.

\paragraph{Loss estimation}

As explained in the settings, at each iteration $t$, the mean loss vectors $L(\cdot,\nu^t)$ and $1-L(\mu^t,\cdot)$ for respectively the min and max players are not observed, only one sample $\ell^t$ is observed by both players. It can however be estimated through the importance sampling estimators:
\begin{align*}
    \hat{\ell}^t_{\textrm{min}}&=\frac{\hat{\ell}^t}{\mu^t(a^t)}\indic{a=a^t}\\
    \hat{\ell}^t_{\textrm{max}}&=\frac{1-\hat{\ell}^t}{\nu^t(b^t)}\indic{b=b^t}\, .
\end{align*}
While these estimators are unbiased, their variance is not bounded as the probabilities associated to any action can become arbitrarily small. For this reason, their use alone theoretically prevents the convergence to the minimax policy, even in average \cite{kozuno_model-free_2021}. A simple way to counteract this issue is to use IX estimation \cite{neu_explore_2015} and add an additive term $\gamma^t$ to the denominator,  but this biases the estimation and potentially worsens the rate. Theorem~\ref{thm:regexp} shows that this bias is not necessary, at least for the $L^2$ convergence.

\paragraph{Regularization}

Given $\tau>0$, we define the regularized zero-sum game $L^\tau$ over $\Delta_A\times\Delta_B$ with
\[L^\tau(\mu,\nu)=\E_{\mu,\nu}\bra{L(a,b)}+\tau \KL(\mu,\mu^0)-\tau \KL(\nu,\nu^0)\]

where $\KL$ is the Kullback-Leibler divergence between two distributions and $(\mu^0,\nu^0)$ an arbitrary profile. This game admits a unique Nash equilibrium, which will be denoted by $(\mu^{\star,\tau},\nu^{\star,\tau})$.

Similarly to the regularization of a convex function in order to make it strongly convex, the point of this transformation is to transform the monotone \textit{pseudo-gradient} operator $F:\Delta_A\times\Delta_B\xrightarrow[]{}\R^{A+B}$ defined by $F(\mu,\nu):=(\nabla_\mu L(\mu,\nu),-\nabla_\nu L(\mu,\nu))$ into a strongly monotone operator $F^\tau(\mu,\nu):=(\nabla_\mu L^\tau(\mu,\nu),-\nabla_\nu L^\tau(\mu,\nu))$, which satisfies, for all $(\mu,\nu),(\mu',\nu')\in\Delta_A\times\Delta_B$:
\begin{align*}
    &\scal{F^\tau(\mu,\nu)-F^\tau(\mu',\nu')}{(\mu-\mu',\nu-\nu')}\\
    &\qquad\qquad\geq \tau\pa{\norm{\mu-\mu'}_1^2+\norm{\nu-\nu'}_1^2}\, .
\end{align*}

A higher $\tau$ allows for faster convergence but at the price of a slightly different Nash-equilibrium $(\mu^{\star,\tau},\nu^{\star,\tau})$. This implies a trade-off, as the exploitability gap of this regularized Nash equilibrium for the base game can be shown to be at most proportional to $\tau$. 

\paragraph{Regularized updates}

Given $\tau>0$, a common way of updating is to do a mirror-descent with the regularized pseudo-gradient $F^\tau$, which is for example used by in this setting \citet{cai2023uncoupled}. We instead use the following updates:

\begin{align*}
    \mu^{t} &= \argmin_{\mu_\in\Delta_A} (1-\tau\eta^t)\KL(\mu,\mu^{\tau,t}) + \tau\eta^t \KL(\mu,\mu^0)\\
    \nu^{t} &= \argmin_{\nu\in\Delta_B} (1-\tau\eta^t)\KL(\nu,\nu^{\tau,t}) + \tau\eta^t \KL(\nu,\nu^0)\\
    \mu^{\tau,t+1} &= \argmin_{\mu\in\Delta_A} \eta^t\scal{\hat{\ell}_\textrm{min}^t}{\mu} + \KL(\mu,\mu^t)\\
    \nu^{\tau,t+1} &= \argmin_{\nu\in\Delta_B} \eta^t\scal{\hat{\ell}_\textrm{max}^t}{\nu} + \KL(\nu,\nu^t)\, .
\end{align*}

This method is similar to the one proposed by \citet{munos2023nash}, but is here able to deal with non-symmetric games and bandit feedback. It works in two steps. First, the two policies are regularized proportionally to the learning rate. This gives the two policies $\mu^t$ and $\nu^t$ that are used to sample $\ell^t$. Then a regular mirror step update (with the Kullback-Leibler divergence) is applied with the unbiased estimate of the loss. The whole procedure is summarized in Algorithm~\ref{alg:regexp}.

\begin{algorithm}[t]
\caption{Uncoupled Regularized EXP3}
\begin{algorithmic}[1]
                \STATE \textbf{Input:} Learning rates $\eta^t>0$\\
                Regularization parameter $\tau>0$
                \STATE \textbf{Algorithm:}
                Initialize $\mu^{\tau,1}$ and $\nu^{\tau,1}$ to the uniform policies $\mu^0$ and $\nu^0$.\\
                 \textbf{For} $t=1$ to $+\infty$:\\
                ~~~~ $\mu^{t} \gets \argmin_{\mu_\in\Delta_A} (1-\tau\eta^t)\KL(\mu,\mu^{\tau,t}) + \tau\eta^t \KL(\mu,\mu^0)$\\
                ~~~~ $\nu^{t} \gets \argmin_{\nu\in\Delta_B} (1-\tau\eta^t)\KL(\nu,\nu^{\tau,t}) + \tau\eta^t \KL(\nu,\nu^0)$\\\vspace{0.05cm}
                ~~~~ \textbf{Sample and play} $a^t\sim\mu^t$, $b^t\sim\nu^t$\\
                ~~~~ \textbf{Observe} $\ell^t$\\
                ~~~~ $\mu^{\tau,t+1} \gets \argmin_{\mu\in\Delta_A} \eta^t\scal{\hat{\ell}_\textrm{min}^t}{\mu} + \KL(\mu,\mu^t)$\\
                ~~~~ $\nu^{\tau,t+1} \gets \argmin_{\nu\in\Delta_B} \eta^t\scal{\hat{\ell}_\textrm{max}^t}{\nu} + \KL(\nu,\nu^t)$\\
                 where $\hat{\ell}_\textrm{min}^t=\frac{\ell^t}{\mu^t(a^t)}\indic{a^t}$\textrm{ and }$\hat{\ell}_\textrm{max}^t=\frac{1-\ell^t}{\nu^t(b^t)}\indic{b^t}$\\\vspace{0.1cm}
                \STATE \textbf{Output:} Learning sequence $(\mu^t,\nu^t)$.
\end{algorithmic}
\label{alg:regexp}
\end{algorithm}

These updates allow a relatively simple bound on the Kullback-Leibler divergence between the intermediate profiles $(\mu^{\tau,t},\nu^{\tau,t})$ and the regularized Nash equilibrium, proven in Appendix~\ref{sec:regexp}.

\begin{restatable}{lemma}{lemmaregexp}\label{lemma:klbound}
Let $\tau\in (0,1]$. Taking $\eta^t=2/\pa{\tau(t+1)}$ along with $\mu^{\tau,1}=\mu^0$ and $\nu^{\tau,1}=\nu^0$ the uniform policies gives with the above updates
    \[\E\bra{\KL(\mu^{\tau,\star},\mu^{\tau,t})+\KL(\nu^{\tau,\star},\nu^{\tau,t})}\leq \frac{2(A+B)}{\tau^2\,t}\]
    for all $t\in \N$.
\end{restatable}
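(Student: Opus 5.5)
We prove a one-step recursion for $\Phi^t:=\KL(\mu^{\tau,\star},\mu^{\tau,t})+\KL(\nu^{\tau,\star},\nu^{\tau,t})$ of the form
\[
\E\bra{\Phi^{t+1}}\le (1-\tau\eta^t)\,\E\bra{\Phi^t}+(\eta^t)^2(A+B),
\]
and then unroll it with $\eta^t=2/(\tau(t+1))$. Unrolling gives the rate $2(A+B)/(\tau^2 t)$ by the usual induction.

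\textbf{Regularization step.} The first update is an exact geometric mixture: $\mu^t\propto (\mu^{\tau,t})^{1-\tau\eta^t}(\mu^0)^{\tau\eta^t}$. For any comparator $\mu$ we use the exact identity
\[
(1-\tau\eta)\KL(\mu,\mu^{\tau,t})+\tau\eta\KL(\mu,\mu^0)=\KL(\mu,\mu^t)+c^t,
\]
where $c^t=(1-\tau\eta)\KL(\mu^t,\mu^{\tau,t})+\tau\eta\KL(\mu^t,\mu^0)\ge 0$ is the (negative log-normalizer) constant. Evaluating this at $\mu=\mu^{\tau,\star}$ gives
\[
\KL(\mu^{\tau,\star},\mu^t)\le(1-\tau\eta^t)\KL(\mu^{\tau,\star},\mu^{\tau,t})+\tau\eta^t\KL(\mu^{\tau,\star},\mu^0)-\tau\eta^t\KL(\mu^{\tau,\star},\mu^t)-\tau\eta^t\KL(\mu^t,\mu^0)+\dots
\]
To keep the bookkeeping clean, we write the identity for a general $\mu$ and keep the $\KL(\mu,\mu^0)$ terms explicit. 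We will match them against the regularizer in $L^\tau$.

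\textbf{Mirror step.} This is the standard EXP3 analysis with unbiased importance-weighted losses $\hat\ell\ge 0$, which gives
\[
\KL(\mu^{\tau,\star},\mu^{\tau,t+1})\le \KL(\mu^{\tau,\star},\mu^t)+\eta^t\scal{\hat\ell^t_{\min}}{\mu^{\tau,\star}-\mu^t}+\tfrac{(\eta^t)^2}{2}\sum_a\mu^t(a)\hat\ell^t_{\min}(a)^2.
\]
Here we use $e^{-x}\le 1-x+x^2/2$ for $x\ge0$. The second-moment term has conditional expectation at most $(\eta^t)^2A/2$, since $\E\sum_a\mu^t(a)\hat\ell(a)^2\le A$. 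This is exactly why no IX bias is needed: the losses are nonnegative. The max-player is handled the same way, with at most $(\eta^t)^2B/2$.

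\textbf{Combining the two steps.} Taking conditional expectations, the linear terms become $\eta^t\bigl(L(\mu^{\tau,\star},\nu^t)-L(\mu^t,\nu^t)\bigr)$ plus the symmetric max-player term. We combine these with the $\tau\eta^t[\KL(\mu^{\tau,\star},\mu^0)-\KL(\mu^t,\mu^0)]$ terms coming from the regularization step. The total is $\eta^t$ times $L^\tau(\mu^{\tau,\star},\nu^t)-L^\tau(\mu^t,\nu^{\tau,\star})$, which is $\le0$ by the saddle-point property of $(\mu^{\tau,\star},\nu^{\tau,\star})$.

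\textbf{Main obstacle.} The delicate point is the sign and coefficient bookkeeping in the regularization identity. We must check that the leftover terms are exactly $-\tau\eta^t\KL(\mu^t,\mu^0)$ and $-c^t$ and not something of the wrong sign, and that the contraction factor is $(1-\tau\eta^t)$ applied to $\KL(\mu^{\tau,\star},\mu^{\tau,t})$. We also need $\tau\eta^t\le1$, which holds since $\eta^1=1/\tau$.

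\textbf{Unrolling.} With $\eta^t=2/(\tau(t+1))$, the recursion becomes
\[
\E\bra{\Phi^{t+1}}\le\frac{t-1}{t+1}\,\E\bra{\Phi^t}+\frac{4(A+B)}{\tau^2(t+1)^2}.
\]
Multiplying by $t(t+1)$ telescopes, giving $t(t+1)\E\bra{\Phi^{t+1}}\le \sum_{s\le t}4(A+B)s/(\tau^2(s+1))\le 4(A+B)t/\tau^2$, so $\E\bra{\Phi^{t+1}}\le 4(A+B)/(\tau^2(t+1))$. The tighter constant $2(A+B)$ in the statement then comes from using the sharp $1/2$ factor in the second-moment bound.
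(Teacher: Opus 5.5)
Your proposal is correct and follows the paper's proof. Your geometric-mixture identity is the paper's law of cosines (1) for the regularization step, and your EXP3 potential inequality is identity (2) combined with the second-order bound $\E[\cD(w^t,w^{\tau,t+1})\mid\cF^{t-1}]\le(\eta^t)^2(A+B)/2$. Your saddle-point inequality $L^\tau(\mu^{\tau,\star},\nu^t)\le L^\tau(\mu^t,\nu^{\tau,\star})$ plays the role of the paper's strong-monotonicity step, so you get the same recursion $\E[\Phi^{t+1}]\le(1-\tau\eta^t)\E[\Phi^t]+(\eta^t)^2(A+B)/2$.

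Two small fixes are needed. First, the displayed consequence of the identity should read $\KL(\mu^{\tau,\star},\mu^t)\le(1-\tau\eta^t)\KL(\mu^{\tau,\star},\mu^{\tau,t})+\tau\eta^t\bigl[\KL(\mu^{\tau,\star},\mu^0)-\KL(\mu^t,\mu^0)\bigr]$. The extra $-\tau\eta^t\KL(\mu^{\tau,\star},\mu^t)$ term is not valid, and $\tau\eta^t\KL(\mu^t,\mu^0)$ is already part of $c^t$, so it should not be subtracted twice. Second, your telescoping covers only $t\ge 2$. Handle $t=1$ separately via $\KL(\mu^{\tau,\star},\mu^0)+\KL(\nu^{\tau,\star},\nu^0)\le\log A+\log B\le 2(A+B)/\tau^2$; this is where $\tau\le 1$ is used.
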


To our knowledge, this is the first result of a $\cO(1/(\tau^2 T))$ rate for the Kullback-Leibler divergence between the regularized solution of a game and the iterates under the bandit setting for a matrix game. This improves the rate $\cO(1/\tau\sqrt{t})$ obtained by \citet{dong2024uncoupled}, although this latter result was obtained with high probability (and not only in expectation), in addition of holding for \textit{any} $\tau$-strongly monotone operator.

Using Pinsker inequality \cite{pinsker1964}, this results in a $\cO(1/(\tau\sqrt{T}))$ bound for the $1$-norm between $(\mu^{\tau,t},\nu^{\tau,t})$ and $(\mu^{\tau,\star},\nu^{\tau,\star})$. Considering that the gap between $L^\tau$, for which $(\mu^{\tau,\star},\nu^{\tau,\star})$ is optimal, and that $L$ is at most proportional to $\tau$, the best value of $\tau$ for the trade-off seems to be obtained when $\tau\asymp 1/(\tau\sqrt{T})$ where $\asymp$ denotes asymptotic equivalence up to a constant. This corresponds to $\tau\asymp T^{-1/4}$, 

The following theorem, proven in Appendix~\ref{sec:regexp}, formalizes this idea and provides guarantees for the final output of Algorithm~\ref{alg:regexp}.

\begin{restatable}{theorem}{thmregexp}\label{thm:regexp}
    Let $T$ be a fixed horizon. Then, using regularization $\tau=\pa{\pa{{A+B}}/{T}}^{1/4}\sqrt{2/\pa{\log(A)+\log(B)}}$ and the learning rates $\eta^t= 2/(\tau(t+1))$ for all $t$, Algorithm~\ref{alg:regexp} guarantees
    \[\norm{EG(\mu^{T},\nu^T)}_2\leq 3\sqrt{2} \pa{\frac{A+B}{T}}^{1/4}\sqrt{\log(AB)}\, .\]
\end{restatable}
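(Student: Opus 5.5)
The plan is to split the exploitability gap of $(\mu^T,\nu^T)$ into three pieces: a regularization bias, the distance from the intermediate profile $(\mu^{\tau,T},\nu^{\tau,T})$ to the regularized equilibrium, and the effect of the extra regularization step that turns $(\mu^{\tau,T},\nu^{\tau,T})$ into $(\mu^T,\nu^T)$.

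\textbf{Step 1: reduce to an $\ell_1$ distance.} Since $L(a,b)\in[0,1]$, both $\nu\mapsto\min_{\mu^\dagger}L(\mu^\dagger,\nu)$ and $\mu\mapsto\max_{\nu^\dagger}L(\mu,\nu^\dagger)$ are $1$-Lipschitz for the $\ell_1$ norm. Hence
\[EG(\mu^T,\nu^T)\leq EG(\mu^{\tau,\star},\nu^{\tau,\star})+\norm{\mu^T-\mu^{\tau,\star}}_1+\norm{\nu^T-\nu^{\tau,\star}}_1 .\]

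\textbf{Step 2: bias of the regularized equilibrium.} I will show $EG(\mu^{\tau,\star},\nu^{\tau,\star})\leq\tau(\log A+\log B)$. For any $\mu^\dagger$, optimality of $\nu^{\tau,\star}$ in $L^\tau$ gives $L^\tau(\mu^\dagger,\nu^{\tau,\star})\geq L^\tau(\mu^{\tau,\star},\nu^{\tau,\star})$. Because $\mu^0$ is uniform, $0\leq\KL(\mu,\mu^0)\leq\log A$, and similarly $0\leq\KL(\nu,\nu^0)\leq\log B$. These give $\min_{\mu^\dagger}L(\mu^\dagger,\nu^{\tau,\star})\geq L(\mu^{\tau,\star},\nu^{\tau,\star})-\tau\log A$. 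The symmetric argument for the max-player gives the other half.

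\textbf{Step 3: the fluctuation term.} The update defining $\mu^T$ has a closed form: $\mu^T\propto(\mu^{\tau,T})^{1-\tau\eta^T}(\mu^0)^{\tau\eta^T}$. This is a geometric interpolation toward $\mu^0$, so I would bound $\norm{\mu^T-\mu^{\tau,T}}_1=\cO(\tau\eta^T\log A)$, or at least show it is of lower order, since $\tau\eta^T=2/(T+1)$. I would first try convexity of $\KL(\mu^{\tau,\star},\cdot)$ along this geometric path, which would give $\KL(\mu^{\tau,\star},\mu^T)\leq(1-\tau\eta^T)\KL(\mu^{\tau,\star},\mu^{\tau,T})+\tau\eta^T\KL(\mu^{\tau,\star},\mu^0)$; this is likely how the paper's appendix proceeds. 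Then Pinsker and Jensen, combined with Lemma~\ref{lemma:klbound}, give
\[\E\bigl[(\norm{\mu^T-\mu^{\tau,\star}}_1+\norm{\nu^T-\nu^{\tau,\star}}_1)^2\bigr]\leq 4\,\E[\KL+\KL]\lesssim\frac{8(A+B)}{\tau^2T}+\frac{\log(AB)}{T}.\]

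\textbf{Step 4: combine and optimize.} Minkowski's inequality in $L^2$ yields
\[\norm{EG}_2\leq\tau\log(AB)+\frac{2\sqrt{2(A+B)}}{\tau\sqrt T}+\text{(lower order)}.\]
Plugging in $\tau=((A+B)/T)^{1/4}\sqrt{2/\log(AB)}$ makes both main terms equal to $\sqrt2\,((A+B)/T)^{1/4}\sqrt{\log(AB)}$. That gives $2\sqrt2$, leaving slack up to the stated constant $3\sqrt2$ to absorb the lower-order term (using $\sqrt{\log(AB)/T}\leq((A+B)/T)^{1/4}\sqrt{\log AB}$ for $T\geq1$ and $A+B\geq 2$). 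I also need $\tau\leq1$ so that Lemma~\ref{lemma:klbound} applies; if $\tau>1$, the right-hand side of the theorem exceeds $1\geq EG$ and the bound holds trivially.

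\textbf{Main obstacle.} The delicate step is Step 3: controlling the last regularization step cleanly enough that it costs only a constant. The rest is routine once Lemma~\ref{lemma:klbound} is available.
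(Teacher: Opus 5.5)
Your proof is correct up to an arithmetic slip in Step 4, which is easy to fix. It differs from the paper's in how it handles the last regularization step.

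\textbf{Comparison with the paper.} The paper uses a three-term split
\[
EG(w^T)\le EG(w^{\tau,\star})+\|w^{\tau,T}-w^{\tau,\star}\|_1+\|w^T-w^{\tau,T}\|_1 .
\]
It bounds the last term by evaluating the objective that defines $w^T$ at $w^T$ and at $w^{\tau,T}$, which gives $\KL(w^T,w^{\tau,T})\le\frac{\tau\eta^T}{1-\tau\eta^T}\KL(w^{\tau,T},w^0)$. It then applies Pinsker to each piece separately and combines them with $(a+b+c)^2\le 3(a^2+b^2+c^2)$.

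You instead bound $\cD(w^{\tau,\star},w^T)$ directly by the chord inequality
\[
\cD(w^{\tau,\star},w^T)\le(1-\tau\eta^T)\cD(w^{\tau,\star},w^{\tau,T})+\tau\eta^T\cD(w^{\tau,\star},w^0).
\]
This needs only one Pinsker step and a two-term Minkowski bound. The inequality holds, and it is exactly the convexity you suspected. Still, state its proof instead of hedging. For each player, with $q_\lambda\propto q_1^{1-\lambda}q_2^{\lambda}$,
\[
\KL(p,q_\lambda)=(1-\lambda)\KL(p,q_1)+\lambda\KL(p,q_2)+\log\sum_a q_1(a)^{1-\lambda}q_2(a)^{\lambda},
\]
and the last term is nonpositive by weighted AM--GM. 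Equivalently, it is the first law-of-cosines inequality in the proof of Lemma~\ref{lemma:klbound}, rearranged using the law of cosines once more.

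Your route gives tighter constants. For $w=(\mu,\nu)$, Pinsker only gives $\|w-w'\|_1^2\le 4\cD(w,w')$, not the $2\cD(w,w')$ written in the paper. You use the correct factor $4$ and still land on $3\sqrt2$. Your explicit treatment of $\tau>1$ is also cleaner than the paper's blanket assumption $T\ge A+B\ge 4$: there the right-hand side exceeds $3\log(AB)>1\ge EG$.

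\textbf{The arithmetic slip in Step 4.} With your own (correct) constant, $\frac{2\sqrt{2(A+B)}}{\tau\sqrt T}=2\,((A+B)/T)^{1/4}\sqrt{\log(AB)}$, not $\sqrt2\,((A+B)/T)^{1/4}\sqrt{\log(AB)}$. So the two main terms sum to $(2+\sqrt2)$ times the rate, not $2\sqrt2$. The remaining slack, $2\sqrt2-2$, is smaller than the $2\sqrt2$ you get for the lower-order term after splitting $\sqrt{a+b}\le\sqrt a+\sqrt b$ and using $\sqrt{\log(AB)/T}\le((A+B)/T)^{1/4}\sqrt{\log(AB)}$. The absorption therefore fails as written.

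The fix is not to split the square root. Write $x=((A+B)/T)^{1/4}$. Your bound $\E[\cD(w^{\tau,\star},w^T)]\le\frac{2(A+B)}{\tau^2T}+\frac{2\log(AB)}{T+1}$ with the given $\tau$ yields
\[
\|EG(w^T)\|_2\le\sqrt{\log(AB)}\Bigl(\sqrt2\,x+\sqrt{4x^2+\tfrac{8}{T+1}}\Bigr).
\]
Since $\frac{2}{T+1}\le\sqrt{2/T}\le x^2$, the second term is at most $2\sqrt2\,x$. This gives exactly $3\sqrt2\,((A+B)/T)^{1/4}\sqrt{\log(AB)}$.
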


With the correct choice of regularization, Algorithm~\ref{alg:regexp} therefore benefits from an optimal $L^2$ rate of $\cO(T^{-1/4})$ for the final output. However, as explained above, this choice of regularization depends on the horizon $T$, and the above rate is thus only guaranteed near the final output. The actual last-iterate guarantee of the whole sequence up to $T$ is given by
\[f^t\asymp \min\pa{1,\frac{T^{1/4}}{t^{1/2}}}\]
which technically does not match the lower bound of Theorem \ref{thm:lowerbound}.

\begin{remark}
    An intuitive way of fixing this issue would be to use an adaptive regularization parameter $\tau^t\asymp t^{-1/4}$ that decreases over time. Unfortunately, we failed to show the convergence of this method. The main reason is that, while the regularized solution can be shown to converge to the minimax profile minimizing the entropy as $\tau^t$ goes to $0$, the convergence is sometimes too slow when one of the two minimax policies is on the border of the simplex. This can be interpreted through the perspectives of \citet{azizian2021last}: the Legendre exponent of the entropy is different on the border.
\end{remark}

\section{Doubling trick and regularization}

In order to get the anytime guarantees required for Algorithm~\ref{alg:regexp}, the usual solution in the online learning literature is the doubling trick. It consists in starting the algorithm with a small horizon $T_1$, and recursively restarting it every time the horizon is reached with a new horizon $T_i=2 T_{i-1}$. However, its use is not straightforward for this problem, as a complete restart of the algorithm would go directly against the last-iterate convergence assumption.

For this reason, we propose to use the doubling trick with a slight adjustment. Instead of directly restarting with a weaker regularization and discarding the current iterate after every subloop $i$, the meta-procedure will perform a trade-off between playing the old instance $i-1$, which has already been played over many iterations, and the new instance $i$, which has better asymptotical guarantees. Specifically, it will play the new instance with a certain probability $p_i^j$, close to $0$ at the beginning and increasing to $1$ over the iteration $j$ of subloop $i$ as the new instance gets played more and obtains better guarantees.

This meta-procedure is summarized in Algorithm~\ref{alg:doubling}. As in Algorithm~\ref{alg:eoe}, the two players need to be synchronized in their choice between the old and the new instances, hence the same seed is sampled for both at the initialization.

The following theorem gives the rate of this meta-procedure.

\begin{restatable}{theorem}{thmdoubling}\label{thm:doubling}
    Using Algorithm~\ref{alg:doubling} with, for each $\cA_i$, the algorithms and parameterization of Theorem~\ref{thm:regexp}, along with $T_i=32(i2^i)$ and $S_i=8(2^i)$, we obtain the bound
    \[\norm{EG(\mu^t,\nu^t)}_2\leq 30\pa{\frac{A+B}{t}}^{1/4}\sqrt{\log(AB)\log(t)}\]
    for any total number of iterations $t=T_1+...+T_{i-1}+j$.
\end{restatable}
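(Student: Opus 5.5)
We do not have Algorithm~\ref{alg:doubling} written out, so the plan rests on the natural reading of the text. At iteration $j$ of subloop $i$, a shared seed is used to play the new instance $\cA_i$ (tuned for horizon $T_i$) with probability $p_i^j$, and otherwise the final profile of the old instance $\cA_{i-1}$. The probabilities $p_i^j$ grow from $0$ to $1$ over a ramp of length about $S_i$. Under either branch, the played profile is the current profile of a single instance. Conditioning on the seed therefore gives
\[
\E\bra{EG(\mu^t,\nu^t)^2}\le (1-p_i^j)\,\E\bra{EG(\text{old})^2}+p_i^j\,\E\bra{EG(\cA_i\text{ at its step }k)^2},
\]
where $k$ is the number of updates $\cA_i$ has received so far. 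By the argument of Lemma~\ref{lemma:seed}, this count is close to $\sum_{j'\le j}p_i^{j'}$, and it is deterministic if we use the same floor construction as in Algorithm~\ref{alg:eoe}.

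\textbf{Step 1: guarantee for a single instance at every step.} First we extract from the proof of Theorem~\ref{thm:regexp} a bound valid at every step $k\le T_i$, not just at the final step. We combine Lemma~\ref{lemma:klbound}, Pinsker's inequality, and the $\cO(\tau\log(AB))$ regularization bias to get
\[
\E\bra{EG_k^2}^{1/2}\lesssim \frac{\sqrt{A+B}}{\tau_i\sqrt k}+\tau_i\sqrt{\log(AB)},\qquad \tau_i\asymp\pa{\frac{A+B}{T_i}}^{1/4}.
\]
This profile is $\cO(T_i^{-1/4})$ once $k\gtrsim T_i$, and $\cO(T_i^{1/4}/\sqrt k)$ before that. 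The old instance $\cA_{i-1}$ has finished its horizon $T_{i-1}$, so playing its last profile costs $\cO(((A+B)/T_{i-1})^{1/4}\sqrt{\log AB})$.

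\textbf{Step 2: design the mixing.} The schedule $p_i^j$ must keep the new-instance term $p_i^j\,T_i^{1/2}/k$ of order $T_{i-1}^{-1/2}$, up to constants and logarithms, since this is a bound on the second moment. The choices $T_i=32\,i2^i$ and $S_i=8\cdot 2^i$ make the ramp length $S_i$ a $1/i$ fraction of $T_i$. This lets the new instance accumulate $k\gtrsim T_i\cdot(\text{fraction})$ updates before its weight becomes large. We then bound the mixture by splitting $j$ into two cases. During the ramp, $j\le S_i$, both terms are of order $T_{i-1}^{-1/2}$ times logarithms. After the ramp, we play the new instance alone, and $k\asymp j$ while $T_i^{1/2}/j\lesssim T_{i-1}^{-1/2}\cdot i$. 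The extra factor of $i\approx\log t$ coming from $T_i\propto i2^i$ is the source of the $\sqrt{\log t}$ in the statement.

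\textbf{Step 3: convert to $t$.} Since $t=T_1+\dots+T_{i-1}+j$, we have $t\asymp i2^i$, so $T_{i-1}\asymp t$ up to a constant. Hence $T_{i-1}^{-1/4}\lesssim (A+B)^{1/4}t^{-1/4}$, and taking square roots gives the claim. The remaining work is to collect the numerical constants, which should give $30$.

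\textbf{Main obstacle.} The hard step is Step 2: making the new instance's transient phase (where the bound behaves like $T_i^{1/4}/\sqrt k$) negligible without letting the ramp be so long that the stale old instance dominates. This requires tracking how $k$ depends on $j$ through the seed-synchronised schedule. We will try $p_i^j$ linear in $j/S_i$ first, and fall back to $p_i^j\propto j^2/S_i^2$ if the linear schedule leaves an extra factor.
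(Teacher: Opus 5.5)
Your Steps 1 and 3 match the paper: the proof of Theorem~\ref{thm:regexp} gives $\E[EG(i,k)^2]\le C_{A,B}\sqrt{T_i}/k$ for $k\le T_i$, with $C_{A,B}=18\sqrt{A+B}\log(AB)$, and $t\le 2T_i\le 8T_{i-1}$ turns $T_i$ into $t$. The gap is Step 2, which is the actual content of the theorem.

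First, the schedule is not yours to choose. Algorithm~\ref{alg:doubling} fixes $p_i^j=\min\{1,e^{j/S_i}/T_i\}$, an exponential ramp that starts at $1/T_i$ and only reaches $1$ at $j\approx S_i\log T_i$. So the ramp does not have length about $S_i$, and an analysis of a linear or quadratic ramp would be about a different algorithm.

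Second, and more seriously, those ramps cannot give the rate from the per-instance bound you have. With $p_i^j=j/S_i$ the new instance has $k\approx j^2/(2S_i)$ updates. At $j^\ast=\sqrt{2S_iC_{A,B}\sqrt{T_i}}$ one has $k\approx C_{A,B}\sqrt{T_i}$, so Step~1 gives nothing better than $EG\le 1$. Yet the weight is already $p_i^{j^\ast}=\sqrt{2C_{A,B}\sqrt{T_i}/S_i}=\sqrt{8iC_{A,B}}\,T_i^{-1/4}$. The second moment is then of order $T_i^{-1/4}$, which only gives $\norm{EG(\mu^t,\nu^t)}_2\lesssim t^{-1/8}$. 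A degree-$m$ ramp gives a second moment of order $T_i^{-m/(2(m+1))}$, so your quadratic fallback yields only $t^{-1/6}$. The loss is polynomial, not an extra logarithm. Your assertion that during the ramp both terms are $O(T_{i-1}^{-1/2})$ up to logarithms is exactly the step that fails.

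The missing idea is that the ramp must be exponential. You want $\alpha(i,j)\le p_i^jC_{A,B}\sqrt{T_i}/k$ to stay at the target level $iC_{A,B}/\sqrt{T_i}$. That requires $k\gtrsim p_i^jT_i/i$ as soon as $p_i^j\gtrsim T_i^{-1/2}$. Since $k\ge\sum_{l\le j}p_i^l-1$ is essentially the running sum of the weights, $p_i^j$ can grow at most exponentially, by a factor $e$ every $\asymp T_i/i$ steps.

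The paper's schedule does exactly this with $S_i=T_i/(4i)$. The geometric sum gives $s_i^j=\sum_{l\le j}p_i^l\approx S_ip_i^j$, so $s_i^j-1\ge\frac{S_i}{2}p_i^j$ on $\frac{S_i}{2}\log T_i\le j\le S_i\log T_i$. Hence $\alpha(i,j)\le 2C_{A,B}\sqrt{T_i}/S_i=8iC_{A,B}/\sqrt{T_i}$ uniformly along the ramp. The remaining cases are handled as follows:
\begin{itemize}
\item For $j\le\frac{S_i}{2}\log T_i$ one uses $p_i^j\le T_i^{-1/2}$ and $EG\le 1$. Starting the ramp at $1/T_i$ also makes $s_i^j\gg 1$ by then, so the $-1$ from the seed rounding is harmless.
\item After the ramp, $k\ge S_i/2$.
\item The old instance is bounded the same way, $\beta(i,j)\le 8(i-1)C_{A,B}/\sqrt{T_{i-1}}$.
\end{itemize}
Going from weight $T_i^{-1/2}$ to $1$ takes order $\log T_i$ e-foldings, i.e.\ about $S_i\log T_i=T_i\log T_i/(4i)$ steps. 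This fits inside loop $i$ only because $i\asymp\log T_i$. That, rather than the ramp being a $1/i$ fraction of the loop, is where the $\log t$ factor in the bound comes from.
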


The proof is given in Appendix~\ref{sec:regexp}, and relies on a careful choice of the probabilities $p_i^j$ at each loop $i$ to compensate for the imperfect anytime guarantees of Algorithm~\ref{alg:regexp}. More precisely, this probability $p_i^j$ must be inversely proportional to the squared exploitability gap of the new iterates. This squared exploitability gap is roughly given by the Kullback-Leibler divergence between the iterates and the new regularized solution, whose inverse is proportional to the number of iterations from Lemma~\ref{lemma:klbound}. This justifies an exponential increase up to $1$, after which the procedure can be safely restarted.

\paragraph{Synchronisation} Algorithm~\ref{alg:doubling} therefore reaches the optimal rate $\tcO(T^{-1/4})$ for the $L^2$ last-iterate convergence, up to logarithmic and constant factors. However, it shares some of the weaknesses of Algorithm~\ref{alg:eoe}, as it still requires the two players to synchronize their choices of either instance $\cA_i$ or $\cA_{i-1}$, with the latter being only played for the anytime guarantees.

\begin{algorithm}[t]
\caption{Doubling trick approach}
\label{alg:doubling}
\begin{algorithmic}[1]
			\STATE \textbf{Input:}\\
                ~~~~ Algorithms $\cA_0,\cA_1,\cA_2,...$\\
			~~~~ Horizons $T_1,T_2,...\in\N$\\
                ~~~~ Parameters $S_1,S_2, ...\in\R_{>0}$\\
                \STATE \textbf{Initialize:} \textbf{Draw} $u\sim \cU([0,1])$ common to both players\\
                \STATE \textbf{Algorithm:} 
                 \textbf{For} $i=1$ to $+\infty$:\\
                ~~~~ \textbf{For} $j=1$ to $T_i$:\\
                ~~~~~~~~ $p_i^j\gets\min\{1,\frac{1}{T_i}e^{\frac{j}{S_i}}\}$\\
                ~~~~~~~~ $k_i^j \gets \lfloor \sum_{l=1}^j p_i^l+u\rfloor$\\
                ~~~~~~~~ \textbf{If} $k_i^j>k_i^{j-1}$:\\
                ~~~~~~~~~~~~ \textbf{Play} one iteration of $\cA_i$\\
                ~~~~~~~~ \textbf{Otherwise}:\\
                ~~~~~~~~~~~~ \textbf{Play} one iteration of $\cA_{i-1}$
                \STATE \textbf{Output:} Learning sequence of policies with $L^2$ last-iterate convergence
\end{algorithmic}
\end{algorithm}

\section{Conclusion}

We studied the convergence of uncoupled algorithms for learning zero-sum games with bandit feedback. We showed that imposing anytime last-iterate convergence worsens the rate compared to just requiring convergence of the average policies, with a lower bound of $\Omega(T^{-1/(2+p)})$ for the $L^p$ $F$-norm of the exploitability gap given $p\in(0,2]$, in contrast of the usual rate $\cO(T^{-1/2})$.

We then proposed two algorithms that match this rate. The first relies on some synchronization of the two players to balance between efficient exploration of the game and exploitation of a near-optimal policy. A second algorithm relies instead on a regularization of the game and also matches the rate for the $L^2$ norm. However, the latter does not have the required anytime guarantees. A doubling trick effectively solves this issue, using a synchronization similar to the first algorithm.

This article opens the following research directions:

\textbf{Extensive-form games:} These results could be extended to the more general setting of extensive-form games \cite{kuhn1953extensive}, in which the players take multiple successive actions without complete knowledge of the current game state. The two approaches proposed in this article could be adapted in a relatively straight-forward way (using \texttt{IXOMD} \cite{kozuno_model-free_2021} as algorithm $\cA$ for the first, and the dilated Shannon entropy \cite{kroer_faster_2015} as the regularizer for the second) and obtain the same rate with respect to the horizon $T$. However, an interesting question arises: what is the optimal dependence on the total size of the action sets? This becomes particularly important in the context of extensive-form games where the number of actions \footnote{The number of state-action pairs to be precise.} is very large.

\textbf{More natural methods:} Is it possible to obtain anytime last-iterate guarantees without relying on some synchronicity between the two players?

\textbf{Stronger convergence:} Is the $\Omega(T^{-1/4})$ lower bound tight for the uncoupled $L^p$ convergence given $p>2$, and for the uncoupled convergence with high probability?

We especially wonder if these last two questions could be solved using a modified version of the optimistic mirror descent algorithm, for example by tweaking the estimated losses or the learning rates.

\section*{Impact statement}

This paper presents work whose goal is to advance the field
 of Machine Learning. There are many potential societal
 consequences of our work, none which we feel must be
 specifically highlighted here.

\bibliographystyle{icml2025}
\bibliography{refs,come_refs}

\newpage
\appendix
\onecolumn

\section{Lower bound}
\label{sec:lowerbound}

\thmlowerbound*

\begin{proof}

We will use the games mentioned in the main body, defined with:
    \[M^\epsilon=\begin{bmatrix}
    2/3 - \epsilon & 1/3+\epsilon\\
    1/3 & 2/3
    
\end{bmatrix}\qquad \textrm{with }\epsilon\in [-1/12,1/12].\]
    where the entries of the matrix indicate the parameters of some i.i.d Bernoulli for the actual game.
    
    We will assume that the learning sequence satisfies an asymptotic $L^p$ last iterate convergence with $f$ of the form 
    \[f^t=C_p\, t^{\frac{-1}{2+p}} \quad\textrm{with }C_p\in(0, 1]\, ,\] and show that this is impossible for all games given $C_p$ low enough.
    
    \paragraph{Notations}   
    Let $z^t$ denote the knowledge of both players up to round $t$ included with $z^t:=\pa{\omega, a^1,\ell^1,...,a^t,\ell^t}$, such that the sequences $(\mu^t)$ and $(\nu^t)$ of policies are predictable with respect to the filtration $(\sigma(z^t))_{t\in \N}$ rather than $\cF$ from the assumption.

    We will consider the games $M^{0}$, $M^{\epsilon_T}$ and $M^{-\epsilon_T}$ with a fixed choice of $\epsilon_T$ that we will specify later. Let $\P_0$, $\P_{T}$ and $\P_{-T}$ respectively be the probabilities associated to playing these games, with $\norm{\cdot}_{0,\,p}$, $\norm{\cdot}_{T,\,p}$ and $\norm{\cdot}_{-T,\,p}$ the associated $p$ (pseudo)-norms. For any random variable $X$ and $\theta\in \mathbb{Z}$, we will denote by $\P_\theta^X$ the probability distribution on $X$ given $\P_\theta$.

     Now, given an horizon $T$, let $E^T$ and $E^{-T}$ respectively be the two events $E^T:=\left\{\mu^T(1)\leq 1/2\right\}$ and $E^{-T}:=\left\{\mu^T(1)\geq 1/2\right\}$. $E^T$ denotes a suboptimal choice for the game $M^{\epsilon_T}$, and $E^{-T}$ a suboptimal choice for the game $M^{-\epsilon_T}$ as we will show in the next section. Under $P_0$, one of these two events happens with a probability of at least $1/2$, we will denote it with $\theta^T\in\{-T,T\}$
     
     For clarity, we will take the strategy of the max-player $\nu^t$ to be in the form $\begin{bmatrix}
1/2 + \delta^t\\
1/2 - \delta^t
\end{bmatrix}$ with $\delta^t\in [-1/2,1/2]$..
     
     \paragraph{Link between the policies and exploitability gaps} The exploitability gap will be lower bounded twice in this section: for the max-player under the game $M^0$ and for the min-player under the games $M^{\epsilon^T}$ and $M^{-\epsilon^T}$. We will use the fact that the value of all of these games is $1/2$.

     Given a max-player policy $\nu^t$, we have 
\[M^0 . \nu^t = \begin{bmatrix}
1/2 + \delta^t/3\\
1/2 - \delta^t/3
\end{bmatrix}\]
which implies (for $M^0$), $EG(\mu^t,\nu^t)\geq 1/2 - \min_\mu \scal{\mu}{M^0 . \nu^t} =\delta^t/3$

On the other hand, given a min-player policy $\mu^T=\begin{bmatrix}
\mu^T(1)\\
1-\mu^T(1)
\end{bmatrix}$ and any $\epsilon \in [-1/12,1/12]$,
\[\pa{M^\epsilon}^T . \mu^T =\begin{bmatrix}
1/3 + 1/3\mu^T(1) - \epsilon\mu^T(1)\\
2/3 - 1/3\mu^T(1) + \epsilon\mu^T(1)
\end{bmatrix}.\]
(with the $M^T$ notation for the transpose). Then, using the vectors $e^1=\begin{bmatrix}
1\\
0
\end{bmatrix}$ and $e^2=\begin{bmatrix}
0\\
1
\end{bmatrix}$,

\begin{align*}
    \textrm{With }M^{\epsilon_T} &\textrm{ under event }E^T=\left\{\mu^T(1)\leq 1/2\right\}:\quad \scal{e^2}{\pa{M^{\epsilon_T}}^T . \mu^T}\geq 1/2 + \epsilon_T/2,\\
    \textrm{hence:}\quad &EG(\mu^t,\nu^t)\geq \max_\nu \scal{\nu}{\pa{M^{\epsilon_T}}^T . \mu^T}-1/2\geq \epsilon_T/2
\end{align*}

\begin{align*}
    \textrm{With }M^{-\epsilon_T} &\textrm{ under event }E^{-T}=\left\{\mu^T(1)\geq 1/2\right\}:\quad \scal{e^1}{\pa{M^{-\epsilon_T}}^T . \mu^T}\geq 1/2 + \epsilon_T/2,\\
    \textrm{hence:}\quad &EG(\mu^t,\nu^t)\geq \max_\nu \scal{\nu}{\pa{M^{-\epsilon_T}}^T . \mu^T}-1/2\geq \epsilon_T/2
\end{align*}

     In both cases, the exploitability gap is at least $\epsilon^t/2$.

    \paragraph{Asymptotic KL divergence}
    From the definition of the sequence $(z^t)$ and the additive properties of the KL divergence, we obtain the following inequalities
    \begin{align*}
    \KL\pa{\P_0^{z^T},\P_{\theta^T}^{z^{T}}}&=\KL\pa{\P_0^{z^0},\P_{\theta^T}^{z^0}}+\E_{0}\bra{\sum_{t=1}^T \KL\pa{\P_0^{z^t|z^{t-1}},\P_{\theta^T}^{z^{t}|z^{t-1}}}}\\
        &=\KL\pa{\P_0^{\omega},\P_{\theta^T}^{\omega}}+\E_{0}\bra{\sum_{t=1}^T \KL\pa{\P_0^{a^t|z^{t-1}},\P_{\theta^T}^{a^t|z^{t-1}}}}+ \E_{0}\bra{\sum_{t=1}^T \KL\pa{\P_0^{\ell^t|z^{t-1}},\P_{\theta^T}^{\ell^{t}|z^{t-1}}}}\\
        &= \E_{0}\bra{\sum_{t=1}^T\mu^t(1) \KL\pa{1/2+\delta^t/3,1/2+\delta^t/3-2\delta^t\textrm{sign}(\theta^T)\epsilon_T}}\\
        &\leq 24\epsilon_T^2\E_{0}\bra{\sum_{t=1}^T \abs{\delta^t}^2}\\
        &\leq 24\epsilon_T^2\E_{0}\bra{\sum_{t=1}^T \abs{\delta^t}^p}\\
        &\leq 24\epsilon_T^2\E_{0}\bra{\sum_{t=1}^T \pa{3 EG(\mu^t,\nu^t)}^p}\\
        &\leq 216\epsilon_T^2\,\sum_{t=1}^T \norm{{EG(\mu^t,\nu^t)}}_{0,p}^p
    \end{align*}
    where we especially used the reversed Pinsker inequality with $1/6\leq 1/2+\delta^t/3-2\delta^t\textrm{sign}(\theta^T)\epsilon_T\leq 5/6$, and the previous lower bound of the exploitability gap under $\P_0$. For the third equality:
    \begin{itemize}
        \item $\KL\pa{\P_0^{\omega},\P_{\theta^T}^{\omega}}$ is $0$ as the internal randomness do not depend on the model.
        \item $\KL\pa{\P_0^{a^t|z^{t-1}},\P_{\theta^T}^{a^{t}|z^{t-1}}}$ is also $0$, for all $t$, as the choice of the action (conditionally on the previous move) also does not depend on the model.
        \item From the computations above, $\KL\pa{\P_0^{\ell^t|z^{t-1}},\P_{\theta^T}^{\ell^{t}|z^{t-1}}}$ corresponds to the KL divergence between a Bernoulli $\cB(1/2+\delta^t/3)$ and a Bernoulli $\cB(1/2+\delta^t/3-2\delta^t\textrm{sign}(\theta^T)\epsilon_T)$ if the first move is played by the first player, $0$ otherwise.
    \end{itemize}

    Now assume that the restriction holds for all $t> T_0$, then for any $T> T_0$

    \begin{align*}
        \KL\pa{\P_0^{z^T},\P_{\theta^T}^{z^T}}&\leq 216\epsilon_T^2\pa{T_0 + \sum_{t=T_0+1}^T \pa{f^t}^p}\\
        &\leq 216\epsilon_T^2\pa{T_0 + \sum_{t=1}^T t^{\frac{-p}{2+p}}}\\
        &\leq 216\epsilon_T^2\pa{T_0 + \int_{0}^T t^{\frac{2}{2+p}-1}dt}\\
        &= 216\epsilon_T^2\pa{T_0 + \frac{2+p}{2}\,T^{\frac{2}{2+p}}}\\
        &\leq 216\epsilon_T^2\pa{T_0 + 2\,T^{\frac{2}{2+p}}}.
    \end{align*}

    Which implies the asymptotic bound:
    \[\limsup_{T\xrightarrow[]{}+\infty}\KL\pa{\P_0^{z^T},\P_{\theta^T}^{z^T}}T^{\frac{-2}{2+p}}\leq 432\epsilon_T^2.\]

    \paragraph{Probability of a suboptimal choice}

    Defining $\norm{\cdot}_{TV}$ the total variation between two probabilities $\P$ and $\mathbb{Q}$:
    \[\norm{\P-\mathbb{Q}}_{TV}=\sup_{E\textrm{ event}}\abs{\P(E)-\mathbb{Q}(E)}\]
    and using the measurability of $E^{\theta^T}$ with respect to $\sigma\pa{z^T}$, the fact that $\P_0(E^{\theta^T})\geq 1/2$ by definition and Pinsker inequality, we get
    
    \begin{align*}
        \liminf_{T\xrightarrow[]{}+\infty} \P_{\theta^T}(E^{\theta^T})&\geq \liminf_{T\xrightarrow[]{}+\infty}\bra{\P_0(E^{\theta^T})-\sizednorm{\P_0^{z^T}-\P_{\theta^T}^{z^T}}_{TV}}\\
        &\geq \frac{1}{2}-\limsup_{T\xrightarrow[]{}+\infty}\sqrt{\frac{1}{2} \KL\pa{\P_0^{z^T},\P_{\theta^T}^{z^T}}}\\
        &\geq \frac{1}{2}-6\sqrt{6}\limsup_{T\xrightarrow[]{}+\infty} \epsilon_T T^{\frac{1}{2+p}}\\.
    \end{align*}   
    This implies that fixing $\epsilon_T=\frac{1}{24\sqrt{6}}T^{\frac{-1}{2+p}}$  forces the probability of this suboptimal choice to be at least $\frac{1}{4}$ asymptotically.

     \paragraph{Final bound}

     Combining the previous choice of $\epsilon^T$ with the suboptimality of $\mu^T$ under $E^{\theta^T}$,

    \begin{align*}
        \liminf_{T\xrightarrow[]{}+\infty}\norm{EG(\mu^T,\nu^T)}_{\theta^T,\,p} \:T^{\frac{1}{2+p}} &\geq \liminf_{T\xrightarrow[]{}+\infty}\pa{\P_{\theta^T}\pa{E^{\theta^T}}\,\E_{\theta^T}\bra{\pa{{EG(\mu^T,\nu^T)}^{\,p}\middle| E^{\theta^T}}}}^{\frac{1}{p}}T^{\frac{1}{2+p}}\\
        &\geq \liminf_{T\xrightarrow[]{}+\infty} \P_{\theta^T}\pa{E^{\theta^T}}^{\frac{1}{p}}\,\frac{\epsilon_T}{2}\,T^{\frac{1}{2+p}}\\
        &=\frac{1}{48\sqrt{6}}\liminf_{T\xrightarrow[]{}+\infty} \P_{\theta^T}\pa{E^{\theta^T}}^{\frac{1}{p}}\\
        &\geq \frac{4^{\frac{-1}{p}}}{48\sqrt{6}}\,.
    \end{align*}

    As $\frac{1}{48\sqrt{6}}>\frac{1}{118}$, this implies that the guarantees cannot be attained asymptotically for all games assuming $f$ is defined with
    \[f^t=\frac{4^{\frac{-1}{p}}}{118}t^{\frac{-1}{2+p}}\, .\]
\end{proof}

\section{Proofs for the Simultaneous Explore or Exploit approach}
\label{sec:eoe}

\begin{lemma}
\label{lemma:seed}
    Let $(p^t)\in[0,1]^\mathbb{N}$. The sequence of Bernoulli random variables $(B^t)_{t\in\N}$ defined by 
    \[B^t=\lfloor s^t+u\rfloor-\lfloor s^{t-1}+u\rfloor \quad\textrm{where}\quad s^t=\sum_{i=1}^t p^i \quad \textrm{and}\quad u\sim \mathcal{U}([0,1]) \] 
    satisfies:
    \[\forall t\in \mathbb{N}, \quad\mathbb{E}[B^t]=p^t \quad\textrm{and}\quad \sum_{i=1}^t B^i\geq\left\lfloor s^t \right\rfloor\, .\]
\end{lemma}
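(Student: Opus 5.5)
First, I check that each $B^t$ is Bernoulli. Since $p^t\in[0,1]$, we have $s^t-s^{t-1}=p^t\le 1$, so $\lfloor s^t+u\rfloor-\lfloor s^{t-1}+u\rfloor\in\{0,1\}$. Indeed, for reals $x\le y\le x+1$ the floors differ by $0$ or $1$. The sequence $s^t$ is also nondecreasing, so $B^t\ge 0$.

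For the expectation, I will use the identity $\E_{u\sim\cU([0,1])}\bra{\lfloor x+u\rfloor}=x$, valid for every real $x$. To prove it, write $x=n+f$ with $n\in\mathbb{Z}$ and $f\in[0,1)$. Then $\lfloor x+u\rfloor=n+\indic{u\ge 1-f}$, which has expectation $n+f=x$. Applying this to $x=s^t$ and $x=s^{t-1}$ and using linearity gives $\E[B^t]=s^t-s^{t-1}=p^t$. A cleaner way to see the same fact is that $B^t=1$ exactly when the interval $(s^{t-1}+u,\,s^t+u]$ contains an integer. Equivalently, $u$ must lie in a set of length $p^t$ modulo $1$.

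For the cumulative bound, the sum telescopes:
\[\sum_{i=1}^t B^i=\lfloor s^t+u\rfloor-\lfloor s^0+u\rfloor=\lfloor s^t+u\rfloor-\lfloor u\rfloor,\]
using $s^0=0$. Almost surely $u\in[0,1)$, so $\lfloor u\rfloor=0$. Since $u\ge 0$, the sum equals $\lfloor s^t+u\rfloor\ge\lfloor s^t\rfloor$. If one wants the bound surely rather than almost surely, the case $u=1$ gives $\lfloor s^t+1\rfloor-1=\lfloor s^t\rfloor$, so the bound still holds.

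There is no real obstacle here. The only care needed is the floor-expectation identity and the convention $s^0=0$, which matches the initialization $k^0=0$ in Algorithm~\ref{alg:eoe}.
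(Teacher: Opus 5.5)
Your proof is correct. The telescoping argument for $\sum_{i\le t}B^i\ge\lfloor s^t\rfloor$ matches the paper's; you also handle the null event $u=1$, which is harmless.

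For the expectation you take a different route. The paper sets $x=s^{t-1}$, $y=s^t$ and computes $\P(\lfloor x+u\rfloor+1=\lfloor y+u\rfloor)$ directly. It splits into the cases $\{x\}\le\{y\}$ and $\{x\}>\{y\}$ on fractional parts and identifies the corresponding $u$-intervals. You instead prove the single identity $\E[\lfloor x+u\rfloor]=x$ and conclude by linearity.

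Your version is shorter and needs no case split. It does not even need $B^t\in\{0,1\}$ to get $\E[B^t]=p^t$; you check that property separately. The paper leaves it implicit, although its route needs it to pass from $\P(B^t=1)$ to $\E[B^t]$.

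Your argument is also more robust at the boundary $p^t=1$. There $\{x\}=\{y\}$, so the paper's first case applies, but its claim $\lfloor x\rfloor=\lfloor y\rfloor$ fails. The probability it would compute, $\{y\}-\{x\}=0$, is wrong; the true value is $1$. Your identity covers this case automatically.

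What the paper's route gives is an explicit description of the event $\{B^t=1\}$ as an arc of length $p^t$ in $u$ modulo $1$. That is exactly your ``cleaner way'' remark.
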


\begin{proof}
    Let $x,y\in\R$ such that $0\leq y-x\leq 1$. Let $\{\cdot\}$ denote the fractional part. We distinguish two cases with the same result:
    \begin{itemize}
        \item If $\{x\}\leq \{y\}$ (and consequently $\fl{x}=\fl{y}$), then 
        \[\P\pa{\fl{x+u}+1= \fl{y+u}}=\P\pa{u\in [1-\{y\},1-\{x\})}=\{y\}-\{x\}=y-x\]
        \item If $\{x\}>\{y\}$ (which necessarily implies $\fl{x}=\fl{y}-1$),
        \[\P\pa{\fl{x+u}+1=\fl{y+u}}=\P\pa{u\in [0,1-\{x\}) \cup [1-\{y\},1]}=1-\{x\}+\{y\}=y-x\, .\]
    \end{itemize}
    Using, for each $t\in\N$, $x=s^{t-1}$ and $y=s^t$  yields the first equality. The inequality is obtained by telescoping the term:
    \[\sum_{i=1}^T B^i=\fl{s^T+u}-\fl{u}\geq \fl{s^T}\].
\end{proof}

\lemmaeoe*

\begin{proof}
    Using the previous lemma where $B^t$ is the action of exploring, we obtain as $g$ is non-increasing,
    \begin{align*}
        \norm{EG(\mu^t,\nu^t)}_p&=\E\bra{EG(\mu^t,\nu^t)^p}^{\frac{1}{p}}\\
        &= \pa{p^t\E\bra{EG(\mu^t,\nu^t)}+ (1-p^t) \E\bra{EG(\hat{\mu}^{k^t},\hat{\nu}^{k^t}}}^{\frac{1}{p}}\\
        &\leq \pa{p^t + \pa{g^{r^t}}^p}^{\frac{1}{p}}\\
        &\leq 2^{\frac{1}{p}}\bra{\pa{p^t}^p+ g^{r^t}}
    \end{align*}
    where we used the notation $k^t=\lfloor\sum_{k=1}^t p^k+u\rfloor$ and the inequalities, for all $x,y\in \R_{>0}$:
    \begin{align*}
        (x+y)^q &\leq 2^{q-1}(x^q + y^q) \quad \textrm{for } q\geq 1\quad\textrm{from the convexity of }x^{q}\\
        (x+y)^q &\leq (x^q + y^q) \quad \textrm{for } q<1\quad \textrm{as }y\mapsto (x+y)^q - x^q - y^q \textrm{ is decreasing on $\R_{>0}$.}
    \end{align*}
\end{proof}

\begin{theorem}\label{thm:neu}\cite{neu_explore_2015}
    Given $A$ arms and $\delta\in(0,1)$, setting $\eta^t=2\gamma^t=\sqrt{\frac{\log\, A}{At}}$ for all $t$, the bound of EXP3-IX is
    \[R^T\leq 4\sqrt{AT\log(A)}+\pa{2\sqrt{\frac{AT}{\log\, A}}+1}\log(2/\delta)\]
    with probability $1-\delta$
\end{theorem}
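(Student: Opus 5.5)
The plan is to follow Neu's implicit-exploration analysis, adapted to time-varying parameters $\eta^t=2\gamma^t=\sqrt{\log(A)/(At)}$. We work with the IX estimator $\tilde\ell^t(a)=\ell^t(a)\indic{a^t=a}/(p^t(a)+\gamma^t)$, where $p^t$ is the exponential-weights distribution. The argument has three parts: a deterministic exponential-weights inequality on the estimated losses, a conversion from estimated to true losses, and one concentration inequality applied twice.

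\textbf{Step 1 (concentration lemma).} We will prove the following. Let $(\gamma^t)$ be nonincreasing and let $\alpha^t(a)\in[0,2\gamma^t]$ be predictable. Then with probability at least $1-\delta'$,
\[\sum_{t=1}^T\sum_a \alpha^t(a)\bigl(\tilde\ell^t(a)-\ell^t(a)\bigr)\leq \log(1/\delta').\]
First, since $\ell^t(a)\le 1$, we have $\tilde\ell^t(a)\leq \frac{1}{2\gamma^t}\cdot\frac{2\gamma^t\ell^t(a)\indic{a^t=a}/p^t(a)}{1+\gamma^t\ell^t(a)/p^t(a)}$. Next, the bound $\frac{z}{1+z/2}\leq\log(1+z)$ gives $\alpha^t(a)\tilde\ell^t(a)\le \log(1+\alpha^t(a)\hat\ell^t(a))$, where $\hat\ell^t$ is the unbiased importance-weighted estimator. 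Exponentiating, we get $\E_{t-1}[\exp(\sum_a\alpha^t(a)\tilde\ell^t(a))]\le \exp(\sum_a\alpha^t(a)\ell^t(a))$. Hence $\exp\bigl(\sum_{s\le t}\sum_a\alpha^s(a)(\tilde\ell^s(a)-\ell^s(a))\bigr)$ is a supermartingale, and Markov's inequality concludes. The nonincreasing $\gamma^t$ is what makes the anytime version go through unchanged.

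\textbf{Step 2 (exponential weights with decreasing $\eta^t$).} The standard potential argument with nonincreasing $\eta^t$ gives
\[\sum_t\scal{p^t}{\tilde\ell^t}-\sum_t\tilde\ell^t(a^\star)\leq\frac{\log A}{\eta^T}+\sum_t\frac{\eta^t}{2}\sum_a p^t(a)\tilde\ell^t(a)^2.\]
Since $p^t(a)\tilde\ell^t(a)\le \ell^t(a)\le 1$ on the played arm, the second-order term is at most $\sum_t\frac{\eta^t}{2}\sum_a\tilde\ell^t(a)$. Moreover, $\scal{p^t}{\tilde\ell^t}=\ell^t(a^t)-\gamma^t\sum_a\tilde\ell^t(a)$. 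Combining, the realized regret is bounded by
\[\frac{\log A}{\eta^T}+\sum_t\Bigl(\frac{\eta^t}{2}+\gamma^t\Bigr)\sum_a\tilde\ell^t(a)+\sum_t\bigl(\tilde\ell^t(a^\star)-\ell^t(a^\star)\bigr).\]

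\textbf{Step 3 (combine).} Because $\eta^t/2+\gamma^t=2\gamma^t$, we apply Step 1 with $\alpha^t\equiv 2\gamma^t$ at confidence $\delta/2$. This replaces $\sum_t 2\gamma^t\sum_a\tilde\ell^t(a)$ by $\sum_t2\gamma^t\sum_a\ell^t(a)+\log(2/\delta)\le 2A\sum_t\gamma^t+\log(2/\delta)$. For the last term we apply Step 1 with $\alpha^t=2\gamma^t e_{a^\star}$, union-bounded over arms, or equivalently with the fixed weight $2\gamma^T$ after a monotonicity adjustment. Dividing by $2\gamma^T$ yields a bound of $\log(2/\delta)/(2\gamma^T)=\sqrt{AT/\log A}\,\log(2/\delta)$ up to the stated constant. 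The remaining arithmetic is $\sum_{t\le T}t^{-1/2}\le 2\sqrt T$, which gives $\log A/\eta^T+A\sum_t\eta^t\le \sqrt{AT\log A}+2\sqrt{AT\log A}$, consistent with $4\sqrt{AT\log A}$.

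\textbf{Main obstacle.} I expect the delicate step to be the deviation term $\sum_t(\tilde\ell^t(a^\star)-\ell^t(a^\star))$ with time-varying $\gamma^t$ and a possibly data-dependent best arm $a^\star$. The lemma naturally controls $\gamma^t$-weighted sums, while the regret needs unweighted ones. I would first try using the weight $\alpha^t(a)=2\gamma^T$, which is valid since $\gamma^T\le\gamma^t$, simultaneously for all arms, and then divide by $2\gamma^T$. Getting exactly $\log(2/\delta)$ rather than $\log(2A/\delta)$ may require summing over arms inside the exponential supermartingale, which works because the per-arm estimators share the bound.
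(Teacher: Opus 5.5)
Your Steps 1 and 2, and the first application of the lemma with $\alpha^t\equiv 2\gamma^t$, reproduce Neu's argument, which the paper cites rather than reproves. The gap is in the last term $\sum_t(\tilde\ell^t(a^\star)-\ell^t(a^\star))$.

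The two routes you call equivalent are not. With $\alpha^t=2\gamma^t e_a$ the lemma only bounds $\sum_t\gamma^t(\tilde\ell^t(a)-\ell^t(a))$. You cannot divide by $\gamma^T$ to get the unweighted sum, because the increments change sign: they equal $-\ell^t(a)$ on every round where $a$ is not played. Meanwhile the weights $\gamma^t/\gamma^T\geq 1$ are largest on the early rounds. Only the fixed weight $\alpha^t=2\gamma^T e_a$ works, and it is admissible because $\gamma^t$ is deterministic and nonincreasing. This, and not the lemma itself (which only uses $\alpha^t\le 2\gamma^t$ round by round), is where monotonicity enters.

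Your suggested way of avoiding the $\log A$ also fails. Putting all arms into one exponential with $\alpha^t(a)\equiv 2\gamma^T$ controls $\sum_a(\tilde L(a)-L(a))$, where $\tilde L(a)=\sum_t\tilde\ell^t(a)$ and $L(a)=\sum_t\ell^t(a)$. Since the IX estimates are biased downward, the other arms' deviations can be very negative, so nothing follows for the single, data-dependent arm $a^\star$. Controlling a maximum inside the exponential via $\exp(\max_a X_a)\le\sum_a\exp(X_a)$ is just the union bound again.

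The missing observation is that the union bound is already paid for in the statement. Apply the lemma to each arm with $\alpha^t=2\gamma^T e_a$ at level $\delta/(2A)$, using $2\gamma^T=\eta^T=\sqrt{\log A/(AT)}$. This gives, simultaneously for all $a$,
$\tilde L(a)-L(a)\le\log(2A/\delta)/(2\gamma^T)=\sqrt{AT\log A}+\sqrt{AT/\log A}\,\log(2/\delta)$.
Your remaining terms give $\log A/\eta^T+A\sum_t\eta^t\le 3\sqrt{AT\log A}$. The extra $\sqrt{AT\log A}$ coming from $\log A/(2\gamma^T)$ is exactly what turns $3$ into the constant $4$; writing ``consistent with $4\sqrt{AT\log A}$'' hid this cost.

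The total is $4\sqrt{AT\log A}+(\sqrt{AT/\log A}+1)\log(2/\delta)$, with overall failure probability $\delta/2+A\cdot\delta/(2A)=\delta$. This is within the stated bound, whose coefficient $2\sqrt{AT/\log A}$ has slack. With this repair your proof is complete and coincides with the cited one.
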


Applying this theorem to both $R^t_\textrm{min}$ and $R^t_\textrm{max}$ with $\delta'=\delta/2$ gives the bound, with probability at least $1-\delta$:
\begin{align*}
    R^t_\textrm{min}+R^t_\textrm{max}&\leq 4\sqrt{At\log(A)}+\pa{2\sqrt{\frac{At}{\log\, A}}+1}\log(4/\delta)+4\sqrt{Bt\log(B)}+\pa{2\sqrt{\frac{Bt}{\log\, B}}+1}\log(4/\delta)\\
    &\leq 8\sqrt{(A+B)t}\log(A+B)+\pa{2\sqrt{At}+2\sqrt{Bt}}\log(4/\delta)\\
    &\leq 8\sqrt{(A+B)t}\log(4(A+B)/\delta)
\end{align*}
where we used very loose upper bounds for simplicity.

\thmeoe*

\begin{proof}
    \textbf{$\mathbf{L^p}$ bound}
    
    We first show that the average policy played by \EXPIX~converges for the $L^p$ norm. Let $p\in (0,2]$ and $t\in\N$, we consider $\delta=p/t^p$ in the above inequality. Then, as the sum of the two regrets is bounded by $T$,
    \begin{align*}
        \norm{EG(\hat{\mu}^t,\hat{\nu}^t)}_p&=\E\bra{EG(\hat{\mu}^t,\hat{\nu}^t)^p}^{\frac{1}{p}}\\
        &\leq \frac{1}{t}\bra{\pa{R^t_\textrm{min}+R^t_\textrm{max}}^p}^{\frac{1}{p}}\\
        &\leq \frac{1}{t}\bra{\delta t^p+\pa{1-\delta}\pa{8\sqrt{(A+B)t}\log(4(A+B)/\delta)}^p}^{\frac{1}p}\\
        &\leq\frac{1}{t}\bra{p+\pa{8\sqrt{(A+B)t}\log(4(A+B)t^2/p)}^p}^{\frac{1}p}\, .
    \end{align*}
    As for all $x\geq 1$, from the concavity of the $\log$ function,
    \[\pa{p+x^p}^{\frac{1}{p}}=e^{\frac{1}{p}\log(p+x^p)}\leq e^{\frac{1}{p}\log(x^p)+1}=ex\, ,\] 
    we obtain
    \[\norm{EG(\hat{\mu}^t,\hat{\nu}^t)}_p\leq 6\sqrt{\frac{A+B}{t}}\log\pa{4(A+B)t^2/p}\]

    \textbf{Lemma application}
    We apply Lemma~\ref{lemma:eoe} with $p^t=t^{-\frac{p}{2+p}}$. As in this case,
    \[\left\lfloor r^t\right\rfloor\geq \left\lfloor \int_{1}^{t+1}u^{-\frac{p}{2+p}}du\right\rfloor\geq \left\lfloor\frac{2+p}{p}\pa{(t+1)^{\frac{2}{2+p}}-1}\right\rfloor\geq 2\pa{t^\frac{2}{2+p}-2}.\]

    We obtain (loosely) a rate
    \begin{align*}
        f^t&=2^{\frac{1}{p}}\bra{(p^t)^\frac{1}{p}+g^{r^t}}\\
        &\leq 2^{\frac{1}{p}}\bra{t^{-\frac{1}{2+p}}+11\sqrt{2}\sqrt{A+B}\log\pa{4(A+B)t^2/p}t^{-\frac{1}{2+p}}}\\
        &\leq 2^{\frac{1}{p}}17\sqrt{A+B}\log\pa{4(A+B)t^2/p}t^{-\frac{1}{2+p}}\,.
    \end{align*}
    
\end{proof}

\section{Proofs for the Regularized EXP3 algorithm}

\label{sec:regexp}

\paragraph{Notations:} In this section, the computations will be done directly in the space $W=\Delta_A\times \Delta_B$ using $w=(\mu,\nu)$ and the operators, for $\tau>0$,
\begin{align*}
    F(w)&=(L(\cdot,\nu),1-L(\mu,\cdot))\\
    \hat{F}^t&=(\hat{\ell}^t_\textrm{min},\hat{\ell}^t_\textrm{max})\\
    h(w)&= \sum_{a=1}^A \mu(a)\pa{\log(\mu(a))-1}+\sum_{b=1}^B \nu(b)\pa{\log(\nu(b))-1}\\
    F^\tau(w)&= (L(\cdot,\nu), 1-L(\mu,\cdot))+\tau \pa{\nabla h(w)-\nabla h(w^0)}\\
    \cD(w,w')&=\KL(\mu,\mu')+\KL(\nu,\nu')\, .
\end{align*}

With these notations, we have the following properties:
\begin{itemize}
    \item $F$ is a monotone operator over $W$, and the solutions $w^\star\in W$ of the variational inequalities
    \[\forall w\in W, \scal{F(w^\star)}{w^\star-w}\leq 0\]
    are the Nash equilibrium of the game associated to $L$.
    \item $F^\tau$ is a strongly monotone operator. It has only one solution $w^{\tau,\star}$, for which the above inequality is always an equality and satisfies for all $w,w'\in W$:
    \[\scal{F^\tau(w)-F^\tau(w')}{w-w'}=\tau\pa{D(w,w')+D(w',w)}\]
    \item $\E(\hat{F}^t| \cF^{t-1})=F(w^t)$
    \item $\cD$ is the Bregman divergence associated to $h$, and especially satisfies the law of cosines, for all $x,y,z\in W$:
    \[\cD(x,z)=\cD(x,y)+\cD(y,z)+\scal{\nabla h(z)-\nabla h(y)}{y-x}\]
\end{itemize}

Before proving the lemma, we can notice that the updates of Algorithm~\ref{alg:regexp} can be rewritten in the $W$ space (if $\tau\eta^t\leq 1$):
\begin{align*}
    w^{t}&=\argmin_{w\in W} (1-\eta^t\tau)D(w,w^{\tau,t})+\eta^t\tau D(w,w^0)\\
    w^{\tau,t+1}&=\argmin_{w\in W} \eta^t\scal{\hat{F}^t}{w}+D(w,w^t)
\end{align*}

which are equivalent to the updates, taking the gradient of the above expressions,
\begin{align*}
       \nabla h(w^t)&\equiv (1-\tau\eta^t) \nabla h(w^{\tau,t})+ \tau\eta^t \nabla h(w^0)\\
       \nabla h(w^{\tau,t+1})&\equiv \nabla h(w^t) - \eta^t \hat{F}^t
\end{align*}

where we used the notations
\[x\equiv y \iff \forall w\in W, \scal{x-y}{w}=0\]

\lemmaregexp*

\begin{proof}
    \textbf{Updates:}
    From the law of cosines applied to the two updates, we have:
    \begin{align*}
        (1)\quad (1-\eta^t\tau)\cD(w^{\tau,\star},w^{\tau,t})&=(1-\eta^t\tau)\cD(w^{\tau,\star},w^t)+(1-\eta^t\tau)\cD(w^t,w^{\tau,t})\\
        &\qquad\qquad\qquad+(1-\eta^t\tau)\scal{\nabla h(w^{\tau,t})-\nabla h(w^{t})}{w^{t}-w^{\tau,\star}}\\
        &\geq (1-\eta^t\tau)\cD(w^{\tau,\star},w^{t})+\eta^t\tau\scal{\nabla h(w^{t})-\nabla h(w^0)}{w^{t}-w^{\tau,\star}}
    \end{align*}
    and
    \begin{align*}
        (2)\quad \cD(w^{\tau,\star},w^{\tau,t+1})&=\cD(w^{\tau,\star},w^t)+\cD(w^t,w^{\tau,t+1})+\scal{\nabla h(w^{\tau,t+1})-\nabla h(w^t)}{w^t-w^{\tau,\star}}\\
        &=\cD(w^{\tau,\star},w^t)+\cD(w^t,w^{\tau,t+1})-\eta^t\scal{\hat{F}^{t}}{w^t-w^{\tau,\star}}.
    \end{align*}
    
    $(2)-(1)$ yields when conditioned on $\cF^{t-1}$:
    \begin{align*}
        \E\bra{\cD(w^{\tau,\star},w^{\tau,t+1})|\cF^{t-1}}&\leq(1-\eta^t\tau)\cD(w^{\tau,\star},w^{\tau,t})+\E\bra{\cD(w^t,w^{\tau,t+1})|\cF^{t-1}}\\
        &\qquad+\eta^t\tau \cD(w^{\tau,\star},w^t)-\eta^t\scal{w^t-w^{\tau,\star}}{F^\tau(w^t)}\\
        &=(1-\eta^t\tau)\cD(w^{\tau,\star},w^{\tau,t})+\E\bra{\cD(w^t,w^{\tau,t+1})|\cF^{t-1}}\\
        &\qquad+\eta^t\tau \cD(w^{\tau,\star},w^t)-\eta^t\scal{w^t-w^{\tau,\star}}{F^\tau(w^t)-F^\tau(w^{\tau,\star})}\\
        &=(1-\eta^t\tau)\cD(w^{\tau,\star},w^{\tau,t})+\E\bra{\cD(w^t,w^{\tau,t+1})|\cF^{t-1}}\\
        &\qquad-\eta^t\tau \cD(w^t,w^{\tau,\star})\\
        &\leq(1-\eta^t\tau)\cD(w^{\tau,\star},w^{\tau,t})+\E\bra{\cD(w^t,w^{\tau,t+1})|\cF^{t-1}}\, ,
    \end{align*}

    \textbf{Second order bound:}
    We now want to show 
    \[\E\bra{\cD(w^t,w^{\tau,t+1})|\cF^{t-1}}\leq \pa{\eta^t}^2\frac{A+B}{2}.\]
    We start by showing 
    \[\E\bra{\KL(\mu^t,\mu^{\tau,t+1})|\cF^{t-1}}\leq \pa{\eta^t}^2\frac{A}{2}\]
    and the rest will follow by symmetry. This inequality is classic in the bandit literature, but we provide a quick proof below for completeness.

    If we define $\tilde{\mu}^{\tau,t+1}$ as the unprojected update defined by:
    \[\nabla h_\cA(\tilde{\mu}^{\tau,t+1})= \nabla h_\cA(\mu^t) - \eta^t \hat{\ell}^t_{\textrm{min}}\]
    we know, because of the generalized Pythagorean theorem \citep{urruty2001fundamental}, that
    \[\E\bra{\KL(\mu^t,\mu^{\tau,t+1})|\cF^{t-1}}\leq \E\bra{\KL(\mu^t,\tilde{\mu}^{\tau,t+1})|\cF^{t-1}}.\]
    Then, using the convex conjugate $h^\star$ \citep{urruty2001fundamental}, the following holds, using classical properties of the Bregman divergence:
    \begin{align*}
        \E\bra{\KL(\mu^t,\tilde{\mu}^{\tau,t+1})|\cF^{t-1}}&=\E\bra{\cD_{h_\cA^\star}(\nabla h_\cA(\tilde{\mu}^{\tau,t+1}),\nabla h_\cA(\mu^t))|\cF^{t-1}}\\
        &=\E\bra{\cD_{h_\cA^\star}(\nabla h_\cA(\mu^t)-\eta^t\hat{\ell}^t_{\textrm{min}},\nabla h_\cA(\mu^t))|\cF^{t-1}}\\
        &\leq \frac{\pa{\eta^t}^2}{2}\E\bra{\scal{\nabla^2 h_\cA^\star(\nabla h(\mu^t)). \hat{\ell}^t_{\textrm{min}}}{\hat{\ell}^t_{\textrm{min}}}|\cF^{t-1}}\\
        &=\frac{\pa{\eta^t}^2}{2}\sum_{a=1}^A \mu^t(a)\frac{\ell^t(a)^2}{\mu^t(a)^2}\mu^t(a)\\
        &\leq \pa{\eta^t}^2\frac{A}{2}
    \end{align*}
    where we used Taylor inequality along with the fact that the hessian 
    \[\nabla^2 h_\cA^\star(\nabla h_\cA(\mu))=\nabla^2 h_\cA(\mu)^{-1}=\textrm{Diag}(\mu(a))_{a=1,...,A}\]
    is increasing on all components with respect to $\mu$, which implies the negativity of the third-order term. Indeed, each component of the unprojected policy decreases along the update.

    \textbf{Recursion} We then have the recursive property, taking the global expectation:
    \[\E\bra{\cD(w^{\tau,\star},w^{\tau,t+1})}\leq (1-\eta^t\tau)\E\bra{\cD(w^{\tau,\star},w^{\tau,t})}+(\eta^t)^2\frac{A+B}{2}.\]
    With $\eta^t=\frac{2}{\tau(t+1)}$, this is everything we need for the desired property
    \[\E\bra{\cD(w^{\tau,\star},w^{\tau,t})}\leq 2\frac{A+B}{\tau^2 t}.\]
    Indeed, for $t=1$, the property immediately follows from, as $w^{\tau,1}$ is the uniform profile,
    \[D(w^{\tau,\star},w^{\tau,1})=h(w^{\tau,\star})\leq \log(A)+\log(B)\leq \frac{A+B}{\tau^2}\]
    and $\tau\leq 1$ by assumption.
    
    Then, assuming the property holds for $t$, we notice that for $t+1$
    \begin{align*}
        \E\bra{\cD(w^{\tau,\star},w^{t+1})}&\leq 2\pa{1-\frac{2}{t+1}}\frac{A+B}{\tau^2.t}+2\frac{A+B}{\tau^2(t+1)^2}\\
        &=2\frac{A+B}{\tau^2}\pa{\frac{1}{t}-\frac{2}{t(t+1)}+\frac{1}{(t+1)^2}}\\
        &\leq2\frac{A+B}{\tau^2}\pa{\frac{1}{t}-\frac{1}{t(t+1)}}\\
        &=2\frac{A+B}{\tau^2(t+1)}\,.
    \end{align*}
    
\end{proof}

\begin{lemma}\label{lemma:lipschitz}
    For a zero-sum game with rewards in $(0,1)$, the exploitability gap is $1$-Lipchitz with respect to the $1$-norm.
\end{lemma}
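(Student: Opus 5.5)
The plan is to write the exploitability gap as a sum of two best-response values and to show that each of them is $1$-Lipschitz in its own argument. By definition,
\[EG(\mu,\nu)=\max_{\nu^\dagger\in\Delta_B}L(\mu,\nu^\dagger)-\min_{\mu^\dagger\in\Delta_A}L(\mu^\dagger,\nu).\]
The first term depends only on $\mu$ and the second only on $\nu$. For two profiles $(\mu,\nu)$ and $(\mu',\nu')$, the triangle inequality then gives
\[\abs{EG(\mu,\nu)-EG(\mu',\nu')}\leq \Big|\max_{\nu^\dagger}L(\mu,\nu^\dagger)-\max_{\nu^\dagger}L(\mu',\nu^\dagger)\Big|+\Big|\min_{\mu^\dagger}L(\mu^\dagger,\nu)-\min_{\mu^\dagger}L(\mu^\dagger,\nu')\Big|.\]
It therefore suffices to bound the first term by $\norm{\mu-\mu'}_1$ and the second by $\norm{\nu-\nu'}_1$. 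Summing the two bounds then gives Lipschitzness with respect to the $1$-norm on the product space, $\norm{(\mu,\nu)-(\mu',\nu')}_1=\norm{\mu-\mu'}_1+\norm{\nu-\nu'}_1$.

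For the first term, I would use the elementary fact that a supremum of functions is at least as regular as those functions:
\[\Big|\max_{\nu^\dagger}L(\mu,\nu^\dagger)-\max_{\nu^\dagger}L(\mu',\nu^\dagger)\Big|\leq \sup_{\nu^\dagger}\abs{L(\mu,\nu^\dagger)-L(\mu',\nu^\dagger)}.\]
The same inequality holds for the minimum in the second term. The problem then reduces to bounding the difference of $L$ for a fixed opponent policy.

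For fixed $\nu^\dagger$, write $L(\mu,\nu^\dagger)-L(\mu',\nu^\dagger)=\scal{\mu-\mu'}{L(\cdot,\nu^\dagger)}$. The vector $L(\cdot,\nu^\dagger)$ has entries in $[0,1]$. Since $\mu-\mu'$ sums to zero, I can subtract the constant $1/2$ from every entry without changing the inner product. The resulting vector has sup-norm at most $1/2$, so Hölder's inequality gives
\[\abs{L(\mu,\nu^\dagger)-L(\mu',\nu^\dagger)}\leq \tfrac12\norm{\mu-\mu'}_1,\]
which is even better than needed. The symmetric argument handles the $\nu$ term. The constant $1$ then follows with room to spare, and the claim holds whether one uses the sum of the $1$-norms or their maximum.

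There is no real obstacle here. The only point needing care is to state clearly which norm on $\Delta_A\times\Delta_B$ is meant: the sum of the two $1$-norms is the natural choice, consistent with how $\cD$ and Pinsker's inequality are combined later. One should also note that the hypothesis of rewards in $[0,1]$, rather than $(0,1)$, already suffices.
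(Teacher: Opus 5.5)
Your proof is correct and follows essentially the same route as the paper's: split $EG$ into the $\mu$-dependent and $\nu$-dependent best-response terms, bound each by the supremum over the opponent's policy of the difference of $L$, and use that $L$ is linear in each argument with coefficients in $[0,1]$. Your centering step, which gives the sharper constant $\tfrac12$, is a small refinement, and you correctly write the min-player term as a minimum where the paper's proof has a $\sup$ typo.
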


\begin{proof}
    Let $w=(\mu,\nu)$ and $w'=(\mu',\nu')$ be two profiles in $\Delta_A\times\Delta_B$, then
    \begin{align*}
        \abs{EG(w)-EG(w')}&=\abs{\sup_{\mu^\dagger} L(\mu^\dagger,\nu)-\sup_{\mu^\dagger} L(\mu^\dagger,\nu')-\sup_{\nu^\dagger} L(\mu,\nu^\dagger)+\sup_{\nu^\dagger} L(\mu',\nu^\dagger)}\\
        &\leq \abs{\sup_{\mu^\dagger} L(\mu^\dagger,\nu)-\sup_{\mu^\dagger} L(\mu^\dagger,\nu')}+ \abs{\sup_{\nu^\dagger} L(\mu,\nu^\dagger)-\sup_{\nu^\dagger} L(\mu',\nu^\dagger)}\\
        &\leq \norm{\nu-\nu'}_1+\norm{\mu-\mu'}_1\\
        &\leq \norm{w-w'}_1
    \end{align*}
    following the fact that, for any $(\mu^\dagger,\nu^\dagger)$,  $\nu\mapsto L(\mu^\dagger,\nu)$ and $\mu\mapsto L(\mu,\nu^\dagger)$ are both $1$-Lipschitz as the coefficients of the matrix are in $[0,1]$.
\end{proof}

\thmregexp*

\begin{proof}

We can safely assume $T\geq A+B\geq 4$ as the bound is immediate otherwise. We first notice, at iteration $T$:

\begin{itemize}
    \item From the optimality of $w^{\tau,\star}$ up to the regularization,
    \[EG(w^{\tau,\star})\leq \tau \KL(w^{\tau,\star},w^0)\leq \tau\pa{\log(A)+\log(B)}.\]
    
    \item From Pinsker inequality and Lemma~\ref{lemma:klbound},
\[\norm{w^{\tau,T}-w^{\tau,\star}}_1^2\leq 2\KL(w^{\tau,\star},w^{\tau,T})\leq \frac{4(A+B)}{\tau^2\,T}.\]
    \item By definition of $w^T$ as an $\argmin$,
    \[(1-\eta^T\tau)\KL(w^T,w^{\tau,T})+\tau\eta^T \KL(w^{T},w^{0})\leq \eta^T\tau \KL(w^{\tau,T},w^0).\]
Hence, re-using Pinsker inequality,
    \[\norm{w^T-w^{\tau,T}}^2_1\leq 2 \KL(w^T,w^{\tau,T}) \leq 2\frac{\eta^T\tau}{1-\eta^T\tau}\KL(w^{\tau,t},w^0)\leq 4\eta^T\tau\pa{\log(A)+\log(B)}=\frac{8}{T+1}\pa{\log(A)+\log(B)}\]
    where we used $\eta^T=\frac{2}{\tau(T+1)}$, and in particular $\eta^T\tau\leq \frac{1}{2}$ as $T\geq 3$ by assumption.
\end{itemize}

With all these inequalities together, along with Lemma~\ref{lemma:lipschitz} for the $1$-Lipschitzness of the exploitability gap,
\begin{align*}
    \E\bra{EG(w^{T})^2}&\leq\E\bra{\pa{EG(w^{\tau,\star})+\norm{w^{T}-w^{\tau,\star}}_1}^2}\\\
    &\leq \E\bra{\pa{EG(w^{\tau,\star})+\norm{w^{\tau,T}-w^{\tau,\star}}_1+\norm{w^{T}-w^{\tau,T}}_1}^2}\\
    &\leq 3\,\E\bra{EG(w^{\tau,\star})^2+\norm{w^{\tau,T}-w^{\tau,\star}}^2_1+\norm{w^{T}-w^{\tau,T}}^2_1}\\
    &\leq 3\pa{\tau^2\pa{\log(A)+\log(B)}^2+\frac{4(A+B)}{\tau^2\, T}+\frac{8}{T+1}\pa{\log(A)+\log(B)}}.\\
\end{align*}
In particular, with $\tau^2=\frac{2}{\log(A)+\log(B)}\sqrt{\frac{A+B}{T}}$, we obtain,
\[\E\bra{EG(w^{T})^2}\leq 3\pa{4\sqrt{\frac{A+B}{T}}+\frac{8}{T}}\pa{\log(A)+\log(B)}\,\]
hence, as $T\geq 4$ and $A+B\geq 4$,
\[\E\bra{EG(w^{T})^2}\leq 18\sqrt{\frac{A+B}{T}}\pa{\log(A)+\log(B)}\,\]
and by taking the square root,
\[\norm{EG(w^{T})}_2\leq 3\sqrt{2} \pa{\frac{A+B}{T}}^{1/4}\sqrt{\log(A)+\log(B)}\, .\]
   
\end{proof}

\thmdoubling*

\begin{proof}
    Let $EG(i,s)$ be the exploitability gap of sub-algorithm $M_i$ after $s$ iterations. With the notations \[C_{A,B}=18\sqrt{A+B}\pa{\log(A)+\log(B)}\, ,\] 
    we have from the proof of Theorem~\ref{thm:regexp}, taking $T=s$ and $\tau^2=\frac{2}{\log(A)+\log(B)}\sqrt{\frac{A+B}{T_i}}$,
    \[\E\bra{EG(i,s)^2}\leq C_{A,B}\frac{\sqrt{T_i}}{s}\]
    assuming $s\leq T_i$.
    
    We will also use the notation, for any $j\leq T_i$, $s_i^j=\sum_{l=1}^j p_k^l$, the expected sum of calls of sub-algorithm $M_i$ during the loop $i$. Note that, because of the sampling method, the actual number of calls $s$ will be at least $s_i^j-1$. 
    
    We notice, for any $\frac{1}{2}S_i\log(T_i)\leq j\leq S_i\log(T_i)$, 
    \[s_i^j=\frac{e^{\frac{j+1}{S_i}}-1}{T_i \pa{e^{\frac{1}{S_i}}-1}}\]
    and in particular,
    \begin{align*}
        s_i^j-1&=\frac{e^{\frac{j+1}{S_i}}-1}{T_i \pa{e^{\frac{1}{S_i}}-1}}-1\\
            &\geq \frac{4S_i}{5T_i}\pa{e^{\frac{j+1}{S_i}}-2}\\
            &\geq \frac{4(1-2e^{-2})}{5}\frac{S_i}{T_i}e^{\frac{j+1}{S_i}}\\
            &\geq \frac{S_i}{2T_i}e^{\frac{j+1}{S_i}} (*)
    \end{align*}
    where we used $e^{\frac{
        1}{S_i}}\leq 1+\frac{
        1}{S_i}+\frac{
        1}{S_i^2}\leq 1+ \frac{5}{4S_i}$, $e^{\frac{j+1}{S_i}}\geq e^{\frac{\log(T_i)}{2}}\geq e^2$ .
        
    Now, for any $j\leq T_i$ and $t=T_1+...+T_{i-1}+j$. We have
    \[\E\bra{EG(\mu^t,\nu^t)^2}=\underbrace{p_i^j \,\E\bra{EG(i,s)^2}}_{\alpha(i,j)}+\underbrace{(1-p_i^j)\,\E\bra{EG(i-1,s')^2}}_{\beta(i,j)}\]
    where $s\geq s_k^u-1$ and $s'\geq s_{i-1}^{T_{i-1}}-1$.
    
    \paragraph{First term}($\alpha(i,j)$):
    \begin{itemize}
        \item Either $j\leq \frac{S_i}{2}\log(T_k)$, then $\alpha(i,j)\leq p_i^j\leq\frac{1}{\sqrt{T_i}}$ using $EG(i,s)\leq 1$
        \item Either $\frac{1}{2}S_i\log(T_i)<j\leq S_i\log(T_j)$, in this case, using $(*)$ we obtain
        \begin{align*}
            s&\geq s_i^j-1\geq \frac{S_i}{2T_i}e^{\frac{j+1}{S_i}} \geq \frac{S_i}{2}p_i^j
        \end{align*}
        This yields 
            \[\alpha(i,j)\leq 2C_{A,B}\frac{\sqrt{T_i}}{S_i}=8i C_{A,B}\frac{1}{\sqrt{T_i}}\, .\]
        \item Either $S_i\log(T_i)<j$, in this case, using $(*)$ again with $\fl{S_i\log(T_i)}$ instead of $j$, we obtain:
        \[s\geq s_i^j-1 \geq s_i^j-1\geq \frac{S_i}{2T_i}e^{\frac{j+1}{S_i}}\geq \frac{S_i}{2}\]
    which again yields:
    \[\alpha(i,j)\leq 2C_{A,B}\frac{\sqrt{T_i}}{S_i}=8i C_{A,B}\frac{1}{\sqrt{T_i}}\, .\]
    as $p_i^j\leq 1$.
    \end{itemize}
    
    \paragraph{Second term}($\beta(i,j)$)
    
    For the second term, which depends on the total number of call of algorithm $M_{i-1}$, we will lower-bound this number using only the calls during the loop $i-1$. As $T_{i-1}\geq \lfloor S_{i-1}\log(T_{i-1})\rfloor$, we have, using inequality $(*)$ on the loop $i-1$, with the previous quantity, as above,
    \[s'\geq s_{i-1}^{T_{i-1}}-1\geq s_{i-1}^v-1\geq \frac{S_{i-1}}{2}\]
    and we obtain:
    \[\beta(i,j)\leq 2C_{A,B}\frac{\sqrt{T_{i-1}}}{S_{i-1}}=8(i-1)C_{A,B}\frac{1}{\sqrt{T_{i-1}}}\, .\]

    \paragraph{Final bound}:
    To conclude, we will also need the simple inequalities:
    \begin{itemize}
        \item $T_i\geq \sum_{l=1}^{i-1}T_l$ (and thus $t\leq 2T_i$ if $t$ is in loop $i$).
        \item $4T_{i-1}\geq T_i$
        \item $i\leq \log(T_{i-1})/\log(2)$ 
    \end{itemize}
    Using these, we obtain, for any $i$, $j$ and $t=T_1+ ...+T_{i-1}+j$:
    \begin{align*}
        \E\bra{EG(\mu^t,\nu^t)^2}&=\alpha(i,j)+\beta(i,j)\\
        &\leq 8iC_{A,B}\frac{1}{\sqrt{T_i}}+8(i-1)C_{A,B}\frac{1}{\sqrt{T_{i-1}}}\\
        &\leq \frac{8}{\log(2)}\pa{\sqrt{2}+2\sqrt{2}}C_{A,B}\frac{\log(T_i)}{\sqrt{t}}\\
        &\leq \frac{8}{\log(2)}\pa{\sqrt{2}+2\sqrt{2}}C_{A,B}\frac{\log(t)}{\sqrt{t}}\, .
    \end{align*}
    Finally, taking the square root and using the definition of $C_{A,B}$,
    \begin{align*}
        \norm{EG(\mu^t,\nu^t)}_2&\leq \sqrt{\frac{144}{\log(2)}\pa{\sqrt{2}+2\sqrt{2}}}\pa{\frac{A+B}{t}}^{1/4}\sqrt{\pa{\log(A)+\log(B)}\log(t)}\\
        &\leq 30\pa{\frac{A+B}{t}}^{1/4}\sqrt{\pa{\log(A)+\log(B)}\log(t)}\, .
    \end{align*}
    
\end{proof}

\end{document}